\documentclass[11pt]{article}

\usepackage[preprint]{acl}

\usepackage{times}
\usepackage{latexsym}

\usepackage[T1]{fontenc}

\usepackage[utf8]{inputenc}

\usepackage{microtype}

\usepackage{inconsolata}

\usepackage{graphicx}

\usepackage{url}
\usepackage{booktabs}
\usepackage{nicefrac}
\usepackage{xcolor}
\usepackage{fancyvrb}
\usepackage{fvextra}

\usepackage{wrapfig}
\usepackage{tikz}
\usepackage{float}
\usepackage{caption}
\usepackage{enumerate}
\usepackage{array}
\usepackage[ruled, linesnumbered, boxed]{algorithm2e}
\usepackage{todonotes}

\usepackage{multirow}
\usepackage{makecell}
\usepackage{bbding}
\usepackage{mathtools}

\usepackage[nameinlink]{cleveref}
\Crefname{figure}{Figure}{Figures}
\crefname{figure}{Figure}{Figures}
\crefname{example}{Example}{Example}
\crefname{theorem}{Theorem}{Theorem}
\crefname{corollary}{Corollary}{Corollary}
\crefname{lemma}{Lemma}{Lemma}
\crefname{proposition}{Proposition}{Proposition}
\crefname{assumption}{Assumption}{Assumption}
\crefname{section}{Section}{Section}
\crefname{algorithm}{Algorithm}{Algorithm}

\usepackage{amsthm,thmtools}
\declaretheorem[name=Theorem,numberwithin=section]{theorem}

\declaretheorem[name=Proposition,numberlike=theorem]{proposition}

\declaretheorem[name=Remark,style=definition,numberwithin=section]{remark}

\usepackage{hyperref}
\usepackage{pgfplots}
\pgfplotsset{compat=1.17}
\usepackage{subfig}
\usepackage{amssymb}

\newcommand{\R}{\mathbb{R}}

\newcommand{\calP}{\mathcal{P}}

\newcommand{\calE}{\mathcal{E}}

\newcommand{\abs}[1]{\left|#1\right|}

\usepackage{amsmath,amsfonts,bm}

\def\eqref#1{equation~\ref{#1}}

\def\1{\bm{1}}

\DeclareMathAlphabet{\mathsfit}{\encodingdefault}{\sfdefault}{m}{sl}
\SetMathAlphabet{\mathsfit}{bold}{\encodingdefault}{\sfdefault}{bx}{n}

\providecommand{\R}{\mathbb{R}}

\usepackage{mdframed}

\title{Thought-Aware KV Cache Compaction for Reasoning via Adaptive Attention Matching}

\author{%
  Yang Liu$^{1}$, Bin Chong$^{2}$\thanks{Corresponding author.}, Chongyang Zhang$^{3}$, Hao Zheng$^{3}$, Jiayu Liang$^{4}$, Xu Kefu$^{2}$\\
  $^1$Tsinghua University
  $^2$Peking University
  $^3$Fullive.AI
  $^4$Soochow University
}

\begin{document}
\maketitle
\begin{abstract}
Reasoning language models generate lengthy chain-of-thought (CoT) sequences whose key-value (KV) cache grows linearly and becomes a memory bottleneck during decoding. Existing compaction methods treat reasoning trajectories as flat token sequences and apply uniform compression, ignoring the hierarchical structure of CoT reasoning where different steps vary drastically in importance. We propose \textbf{Thought-Aware Attention Matching (TAM)}, which exploits this structure through three mechanisms: (i)~thought segmentation that decomposes the trajectory into reasoning blocks, (ii)~adaptive budget allocation that assigns compression budget based on each segment's importance and size, and (iii)~pivotal token protection that preserves high-attention reasoning anchors. We prove that the allocation rule is optimal under a convex error model and that cumulative error under sequential compaction remains bounded. Experiments on AIME 2024 and MATH-500 with Qwen3-4B show that TAM improves accuracy over uniform compaction at the same memory footprint, with periodic compaction bounding peak memory to 3.1--3.2\,GB (a 65\% reduction) while maintaining competitive accuracy.
\end{abstract}

\section{Introduction}
\label{sec:introduction}

Memory has emerged as a critical bottleneck in modern language models deployed for long-horizon tasks. Reasoning-capable models produce lengthy chain-of-thought (CoT) sequences~\citep{wei2022cot} before final answers, and the key-value (KV) cache that stores attention states for all previous tokens grows linearly with sequence length~\citep{zhang2023h2o}. This growth substantially increases memory footprint and reduces decoding throughput~\citep{dao2022flashattention}, limiting the practical deployment of reasoning models on resource-constrained devices. While existing KV cache compression methods~\citep{zhang2023h2o,li2024snapkv} primarily target long input prompts at prefill time, the efficient compaction of \emph{generated} tokens during decoding, i.e.\ mid-trajectory compaction, remains underexplored despite its importance for reasoning workloads that routinely produce long chains of intermediate tokens.

Recent work has made progress along complementary directions. Attention Matching (AM)~\citep{zweiger2026fastkv} achieves large one-shot compaction by constructing compact keys and values that reproduce attention outputs and preserve attention mass, using closed-form solutions that avoid gradient descent. However, AM targets prefill scenarios and assumes reference queries from self-study on a fixed context. Reasoning Path Compression (RPC)~\citep{song2025rpc} addresses mid-trajectory compression by periodically evicting tokens based on a selector window of recent queries. Yet RPC relies on simple eviction, which, as we show formally, systematically underestimates attention mass at high compression ratios. Both approaches treat the reasoning trajectory as a flat token sequence and apply uniform compression, ignoring the hierarchical structure of CoT reasoning. While prior methods vary along two axes---the compaction primitive (eviction vs.\ optimization) and the query source (self-study vs.\ selector window)---neither exploits a third, orthogonal axis: structure-aware budget allocation.

CoT trajectories~\citep{kojima2022zeroshotcot} naturally decompose into distinct reasoning steps---problem restatements, intermediate calculations, exploratory dead ends, and conclusions---whose importance to future generation varies dramatically. Dead-end explorations become irrelevant as reasoning progresses, while key intermediate results and problem definitions remain critical throughout. This heterogeneity motivates non-uniform budget allocation: by assigning more compression budget to important reasoning steps and less to irrelevant ones, we can preserve the tokens that matter most while aggressively compressing the rest.

We propose \textbf{Thought-Aware Attention Matching (TAM)}, which realizes this idea through three mechanisms: (i)~thought segmentation that decomposes the reasoning trajectory into coherent blocks, (ii)~adaptive budget allocation that distributes compact keys based on each segment's importance and size, and (iii)~pivotal token protection that preserves high-attention reasoning anchors. These mechanisms operate on top of the AM optimization pipeline and a selector window for lightweight query generation, transforming uniform compaction into a structure-aware scheme.

We make three contributions:
\begin{enumerate}
    \item We introduce TAM, a structure-aware mid-trajectory KV cache compaction method that segments reasoning trajectories into thought blocks, allocates compression budget adaptively based on each segment's importance and size, and protects critical reasoning anchors via pivotal token identification.
    \item We provide theoretical foundations: the allocation $t_i \propto \sqrt{w_i \cdot n_i}$ is optimal under convex error models (Proposition~\ref{prop:allocation}), pivotal protection reduces approximation error (Proposition~\ref{prop:pivotal}), and cumulative error under sequential compaction remains bounded (Proposition~\ref{prop:cumulative}).
    \item We evaluate TAM on AIME 2024 and MATH-500 with Qwen3-4B, demonstrating consistent accuracy improvements over uniform compaction at the same memory budget, with periodic compaction achieving up to 65\% peak-memory reduction.
\end{enumerate}

\section{Related Work}
\label{sec:related}

\paragraph{KV cache eviction and merging.}
KV cache compression has been studied extensively for long-context inference~\citep{xiao2024streamingllm,bai2023longbench,liu2023lost,ding2023longnet}. H2O~\citep{zhang2023h2o} retains tokens with the highest accumulated attention under a sliding window; SnapKV~\citep{li2024snapkv} selects important KV positions by observing attention patterns within an observation window. These methods target long input prompts at prefill time and are not designed for dynamically growing context during decoding. All eviction-based methods apply uniform retention policies, treating every token position equivalently.

\paragraph{Latent-space compaction.}
Cartridges~\citep{eyuboglu2025cartridges} train compact KV caches offline via prefix-tuning at the cost of GPU-hours per context. Attention Matching (AM)~\citep{zweiger2026fastkv} decomposes the compaction problem into closed-form subproblems (OMP for key selection, NNLS for biases, OLS for value fitting), matching Cartridges quality at far lower cost but assuming one-shot compaction at prefill time. Both methods apply uniform compression across all tokens.

\paragraph{Mid-trajectory compression for reasoning.}
Reasoning Path Compression (RPC)~\citep{song2025rpc} addresses mid-trajectory compression by periodically evicting tokens based on a selector window of recent queries. While RPC identifies the selector window as a lightweight proxy for future queries, it relies on eviction rather than latent-space optimization and applies a uniform retention threshold across the entire prefix. TAM operates along an orthogonal axis: it allocates compression budget non-uniformly across reasoning segments based on their importance, a design dimension independent of the compaction primitive or query source.

\section{Preliminaries}
\label{sec:preliminaries}

\subsection{Attention and KV Cache}
\label{sub:attention}

In the Transformer architecture~\citep{vaswani2017attention}, each layer computes attention over queries $\mathbf{Q} \in \R^{n \times d}$, keys $\mathbf{K} \in \R^{T \times d}$, and values $\mathbf{V} \in \R^{T \times d}$. The scaled dot-product attention is
\begin{equation}
\text{Attn}(\mathbf{Q}, \mathbf{K}, \mathbf{V}) = \text{softmax}\left(\frac{\mathbf{Q}\mathbf{K}^\top}{\sqrt{d}}\right) \mathbf{V},
\end{equation}
where the softmax is applied row-wise. For autoregressive generation, the model caches keys and values for all $T$ context tokens, appending new KV pairs as each token is generated. The KV cache thus grows linearly with sequence length, becoming a memory bottleneck for long reasoning trajectories~\citep{dao2022flashattention}.

A key insight for compaction is that attention over concatenated KV blocks decomposes into a mixture of each block's locally normalized output, weighted by that block's attention mass. For a key block $\mathbf{K}$, define the \emph{attention mass} as
\begin{equation}
\text{Mass}(\mathbf{q}; \mathbf{K}) = \sum_{j=1}^{T} \exp\left(\frac{\mathbf{q} \mathbf{K}_j^\top}{\sqrt{d}}\right).
\end{equation}
Matching both the local attention output and the attention mass of a compacted block suffices to preserve its contribution when concatenated with arbitrary future tokens~\citep{zweiger2026fastkv}.

\subsection{Attention Matching (AM)}
\label{sub:am}

Attention Matching~\citep{zweiger2026fastkv} replaces the original cache $(\mathbf{K}, \mathbf{V})$ with $\mathbf{K}, \mathbf{V} \in \R^{T \times d}$ by a compact representation $(\mathbf{C}_k, \boldsymbol{\beta}, \mathbf{C}_v)$ where $\mathbf{C}_k, \mathbf{C}_v \in \R^{t \times d}$ with $t \ll T$, and $\boldsymbol{\beta} \in \R^t$ is a per-token scalar bias. The compact cache should satisfy, for reference queries $\mathbf{q}_i$ of interest,
\begin{equation}
\small
\frac{\exp(\mathbf{q}_i \mathbf{K}^\top / \sqrt{d}) \mathbf{V}}{\sum_j \exp(\mathbf{q}_i \mathbf{K}_j^\top / \sqrt{d})} \approx \frac{\exp(\mathbf{q}_i \mathbf{C}_k^\top / \sqrt{d} + \boldsymbol{\beta})}{\sum_j \exp(\mathbf{q}_i (\mathbf{C}_k)_j^\top / \sqrt{d} + \beta_j)} \mathbf{C}_v,
\end{equation}
\begin{equation}
\small
\sum_{j=1}^{T} \exp(\mathbf{q}_i \mathbf{K}_j^\top / \sqrt{d}) \approx \sum_{j=1}^{t} \exp(\mathbf{q}_i (\mathbf{C}_k)_j^\top / \sqrt{d} + \beta_j).
\end{equation}
The first condition matches the local attention output; the second matches the attention mass. With the substitution $u_j = \exp(\beta_j) \geq 0$, the mass condition becomes a nonnegative least squares problem over $\mathbf{u}$, solved via NNLS~\citep{lawson1995solving}. Given $\mathbf{C}_k$ and $\boldsymbol{\beta}$, the value matrix $\mathbf{C}_v$ is fitted via ordinary least squares. Key selection is performed via orthogonal matching pursuit~\citep{tropp2007omp} (which greedily selects keys to minimize mass residual) or by retaining keys with highest aggregate attention under the reference queries. In prefill AM, reference queries are obtained from self-study or repeat-prefill on the fixed context.

\section{Thought-Aware Attention Matching (TAM)}
\label{sec:tam}

\subsection{Overview}
\label{sub:motivation}

TAM transforms uniform compaction into a structure-aware scheme through three mechanisms operating on top of the AM optimization pipeline: (i)~thought segmentation, (ii)~adaptive budget allocation, and (iii)~pivotal token protection. Section~\ref{sub:theory} provides theoretical justification for the allocation strategy; Figure~\ref{fig:tam-overview} illustrates the full pipeline.

\begin{figure*}[t]
\centering
\includegraphics[width=\linewidth]{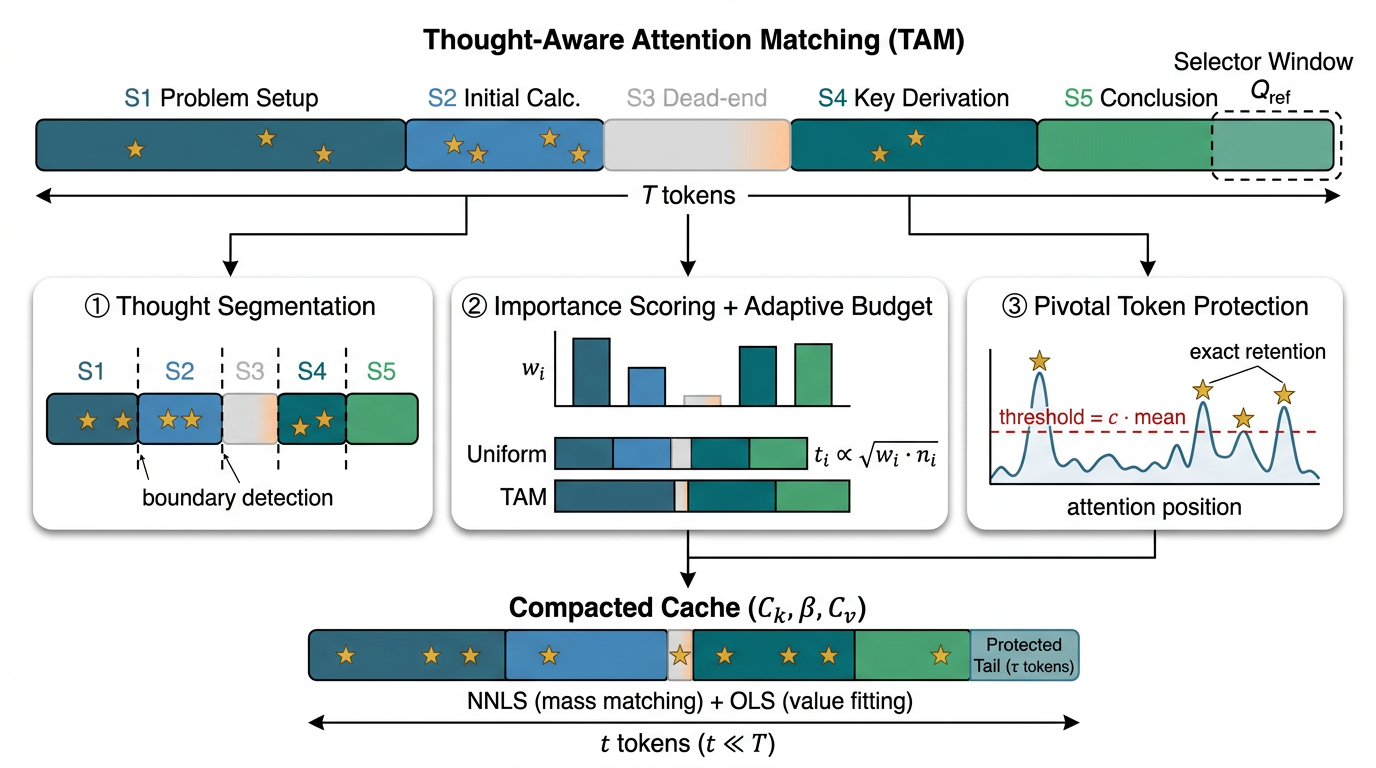}
\caption{Overview of the TAM pipeline. 
}
\label{fig:tam-overview}
\end{figure*}

\subsection{Selector Window Query Generation}
\label{sub:selector}

Let the current sequence length be $L$, comprising the prompt and all reasoning tokens generated so far. The \textbf{selector window} comprises the last $R$ tokens. We extract query vectors $\mathbf{Q}_{\text{ref}} \in \R^{R \times d}$ from these positions by running a single forward pass with hooks to capture query activations. This yields per-layer, per-head reference queries; TAM processes each KV-head independently using its corresponding $\mathbf{Q}_{\text{ref}}$. These queries encode the current reasoning frontier and serve as a lightweight proxy for future query distributions without requiring self-study or repeat-prefill~\citep{song2025rpc}.

\subsection{Thought Segmentation}
\label{sub:segmentation}

TAM decomposes the compactable prefix into reasoning segments $\{S_1, S_2, \ldots, S_m\}$, where each segment corresponds to a coherent reasoning step.

\paragraph{Heuristic segmentation.}
Chain-of-thought outputs exhibit clear structural patterns that delineate reasoning boundaries, most prominently paragraph breaks (double newlines). Reasoning models are trained to organize their output into coherent steps separated by such structural markers, so these boundaries reliably correspond to semantic transitions. We detect double-newline boundaries in the decoded token text using a lightweight rule-based scanner that runs in $O(L)$ time. Consecutive short segments (fewer than $\ell_{\min}$ tokens) are merged to avoid over-fragmentation.

\paragraph{Attention-based segmentation (alternative).}
We also evaluate an attention-based approach that detects boundaries from shifts in attention intensity. For each position $j$, we compute the mean attention $\bar{\alpha}_j$ it receives from the selector window (defined formally in Eq.~\ref{eq:pivotal}). We then measure the local gradient $\Delta_j = |\bar{\alpha}_j - \bar{\alpha}_{j-1}|$ and place boundaries at positions where $\Delta_j$ exceeds $2\times$ the median gradient. This detects transitions between high-attention and low-attention regions, which empirically correspond to reasoning step boundaries. We compare both approaches in Section~\ref{sub:ablations}.

\begin{table*}[t]
\centering
\begin{minipage}[t]{0.54\linewidth}
  \centering
  \caption{Main results at target compaction ratio 0.1 (retain 10\% of KV entries). ``Mem'' is post-compaction steady-state GPU memory (see Section~\ref{sub:setup}); for one-shot methods ($P{=}\infty$), peak memory during generation equals No Compaction. Only TAM (Periodic) maintains bounded memory throughout. Best accuracy in \textbf{bold}; best memory in \underline{underline}.}
  \label{tab:main}
  \small
  \setlength{\tabcolsep}{4pt}
  \resizebox{\linewidth}{!}{%
  \begin{tabular}{@{}l*{4}{r}@{}}
  \toprule
  \multirow{2}{*}{\textbf{Method}} & \multicolumn{2}{c}{\textbf{AIME 2024} (30 prob.)} & \multicolumn{2}{c}{\textbf{MATH-500}} \\
  \cmidrule(lr){2-3} \cmidrule(lr){4-5}
  & Acc (\%) & Mem (GB) & Acc (\%) & Mem (GB) \\
  \midrule
  No Compaction & 63.3 & 9.2 & 71.2 & 8.8 \\
  \addlinespace[2pt]
  Eviction (Selector Window) & 46.7 & 4.1 & 52.4 & 3.9 \\
  AM + Repeat & 53.3 & 4.2 & 61.0 & 4.0 \\
  PAM (Uniform) & 56.7 & 4.0 & 64.6 & 3.8 \\
  \addlinespace[2pt]
  \textbf{TAM (Ours)} & \textbf{60.0} & 4.0 & \textbf{67.8} & 3.8 \\
  TAM (Periodic, $P{=}1024$) & 56.7 & \underline{3.2} & 65.4 & \underline{3.1} \\
  \bottomrule
  \end{tabular}%
  }
\end{minipage}
\hfill
\begin{minipage}[t]{0.44\linewidth}
  \centering
  \caption{TAM compaction time breakdown per step (Qwen3-4B, $\sim$4k tokens). TAM-specific stages are marked with $\dagger$.}
  \label{tab:time-breakdown}
  \small
  \resizebox{\linewidth}{!}{%
  \begin{tabular}{@{}lr@{}}
  \toprule
  \textbf{Stage} & \textbf{Time (s)} \\
  \midrule
  Query extract (selector window) & 0.8 \\
  Thought segmentation + importance$^\dagger$ & 0.1 \\
  Pivotal token identification$^\dagger$ & $<$0.1 \\
  Per-segment key selection & 2.5 \\
  $\beta$ fitting (NNLS, global) & 1.2 \\
  Value fitting (OLS, global) & 0.9 \\
  \midrule
  \textbf{Total} & \textbf{5.6} \\
  \quad\emph{of which TAM-specific} & \emph{$\sim$0.15} \\
  \bottomrule
  \end{tabular}%
  }
\end{minipage}
\end{table*}

\begin{figure*}[t]
\centering
\subfloat[Accuracy vs.\ compaction ratio (AIME 2024).]{
\begin{tikzpicture}
\begin{axis}[
  width=0.48\linewidth, height=4.8cm,
  xlabel={Target ratio (fraction retained)},
  ylabel={Accuracy (\%)},
  xtick={0.05,0.1,0.2},
  xticklabels={0.05,0.10,0.20},
  ymin=28, ymax=68,
  legend style={at={(0.98,0.02)},anchor=south east,font=\tiny,draw=gray!50},
  grid=major, grid style={gray!20},
  every axis plot/.append style={thick,mark size=2.5pt},
]
\addplot[dashed, gray, forget plot] coordinates {(0.05,63.3)(0.2,63.3)};
\node[font=\tiny,gray] at (axis cs:0.15,64.8) {No Compaction};
\addplot[red!70!black, mark=triangle*] coordinates {(0.05,33.3)(0.1,46.7)(0.2,53.3)};
\addplot[orange!80!black, mark=diamond*] coordinates {(0.05,46.7)(0.1,53.3)(0.2,56.7)};
\addplot[blue!70!black, mark=o] coordinates {(0.05,50.0)(0.1,56.7)(0.2,60.0)};
\addplot[green!60!black, mark=star, mark size=3.5pt, line width=1.2pt] coordinates {(0.05,53.3)(0.1,60.0)(0.2,63.3)};
\addplot[violet, mark=square*] coordinates {(0.05,50.0)(0.1,56.7)(0.2,60.0)};
\legend{Eviction,AM+Repeat,PAM (Uniform),TAM (Ours),TAM (Periodic)}
\end{axis}
\end{tikzpicture}
\label{fig:accuracy-vs-ratio}
}
\hfill
\subfloat[Accuracy vs.\ post-compaction memory (AIME 2024).]{
\begin{tikzpicture}
\begin{axis}[
  width=0.48\linewidth, height=4.8cm,
  xlabel={Post-compaction Memory (GB)},
  ylabel={Accuracy (\%)},
  ymin=28, ymax=68,
  x dir=reverse,
  legend style={at={(0.02,0.02)},anchor=south west,font=\tiny,draw=gray!50},
  grid=major, grid style={gray!20},
  every axis plot/.append style={thick,mark size=2.5pt},
]
\addplot[only marks, gray, mark=square*, mark size=3pt] coordinates {(9.2,63.3)};
\addlegendentry{No Compaction}
\addplot[red!70!black, mark=triangle*] coordinates {(3.0,33.3)(4.1,46.7)(5.7,53.3)};
\addplot[orange!80!black, mark=diamond*] coordinates {(3.1,46.7)(4.2,53.3)(5.8,56.7)};
\addplot[blue!70!black, mark=o] coordinates {(3.0,50.0)(4.0,56.7)(5.6,60.0)};
\addplot[green!60!black, mark=star, mark size=3.5pt, line width=1.2pt] coordinates {(3.0,53.3)(4.0,60.0)(5.6,63.3)};
\addplot[violet, mark=square*] coordinates {(2.4,50.0)(3.2,56.7)(4.5,60.0)};
\legend{No Compaction,Eviction,AM+Repeat,PAM (Uniform),TAM (Ours),TAM (Periodic)}
\end{axis}
\end{tikzpicture}
\label{fig:accuracy-vs-memory}
}
\caption{Accuracy vs.\ compaction trade-offs on AIME 2024 (Qwen3-4B). Left: accuracy vs.\ compaction ratio. Right: accuracy vs.\ post-compaction memory. TAM consistently outperforms all baselines; TAM (Periodic) achieves the best memory vs.\ accuracy trade-off.}
\label{fig:accuracy-tradeoffs}
\end{figure*}

\begin{figure*}[t]
\centering
\subfloat[Accuracy vs.\ compaction ratio (MATH-500).]{
\begin{tikzpicture}
\begin{axis}[
  width=0.48\linewidth, height=4.8cm,
  xlabel={Target ratio (fraction retained)},
  ylabel={Accuracy (\%)},
  xtick={0.05,0.1,0.2},
  xticklabels={0.05,0.10,0.20},
  ymin=30, ymax=76,
  legend style={at={(0.98,0.02)},anchor=south east,font=\tiny,draw=gray!50},
  grid=major, grid style={gray!20},
  every axis plot/.append style={thick,mark size=2.5pt},
]
\addplot[dashed, gray, forget plot] coordinates {(0.05,71.2)(0.2,71.2)};
\node[font=\tiny,gray] at (axis cs:0.15,72.7) {No Compaction};
\addplot[red!70!black, mark=triangle*] coordinates {(0.05,38.4)(0.1,52.4)(0.2,61.8)};
\addplot[orange!80!black, mark=diamond*] coordinates {(0.05,49.2)(0.1,61.0)(0.2,66.2)};
\addplot[blue!70!black, mark=o] coordinates {(0.05,52.6)(0.1,64.6)(0.2,68.4)};
\addplot[green!60!black, mark=star, mark size=3.5pt, line width=1.2pt] coordinates {(0.05,56.0)(0.1,67.8)(0.2,70.2)};
\addplot[violet, mark=square*] coordinates {(0.05,53.2)(0.1,65.4)(0.2,68.0)};
\legend{Eviction,AM+Repeat,PAM (Uniform),TAM (Ours),TAM (Periodic)}
\end{axis}
\end{tikzpicture}
\label{fig:accuracy-vs-ratio-math}
}
\hfill
\subfloat[Accuracy vs.\ post-compaction memory (MATH-500).]{
\begin{tikzpicture}
\begin{axis}[
  width=0.48\linewidth, height=4.8cm,
  xlabel={Post-compaction Memory (GB)},
  ylabel={Accuracy (\%)},
  ymin=30, ymax=76,
  x dir=reverse,
  legend style={at={(0.02,0.02)},anchor=south west,font=\tiny,draw=gray!50},
  grid=major, grid style={gray!20},
  every axis plot/.append style={thick,mark size=2.5pt},
]
\addplot[only marks, gray, mark=square*, mark size=3pt] coordinates {(8.8,71.2)};
\addlegendentry{No Compaction}
\addplot[red!70!black, mark=triangle*] coordinates {(2.9,38.4)(3.9,52.4)(5.4,61.8)};
\addplot[orange!80!black, mark=diamond*] coordinates {(3.0,49.2)(4.0,61.0)(5.5,66.2)};
\addplot[blue!70!black, mark=o] coordinates {(2.9,52.6)(3.8,64.6)(5.3,68.4)};
\addplot[green!60!black, mark=star, mark size=3.5pt, line width=1.2pt] coordinates {(2.9,56.0)(3.8,67.8)(5.3,70.2)};
\addplot[violet, mark=square*] coordinates {(2.3,53.2)(3.1,65.4)(4.3,68.0)};
\legend{No Compaction,Eviction,AM+Repeat,PAM (Uniform),TAM (Ours),TAM (Periodic)}
\end{axis}
\end{tikzpicture}
\label{fig:accuracy-vs-memory-math}
}
\caption{Accuracy vs.\ compaction trade-offs on MATH-500 (Qwen3-4B, 500 problems). Left: accuracy vs.\ compaction ratio. Right: accuracy vs.\ post-compaction memory. With larger sample size, TAM's advantage over PAM is more clearly separated. TAM achieves near-full-accuracy recovery at ratio 0.2 (70.2\% vs.\ 71.2\%).}
\label{fig:accuracy-tradeoffs-math}
\end{figure*}

\begin{table*}[t]
\centering
\begin{minipage}[t]{0.33\linewidth}
  \centering
  \caption{Component ablation at target ratio 0.1. 
  }
  \label{tab:ablation-components}
  \small
  \setlength{\tabcolsep}{4.5pt}
  \resizebox{\linewidth}{!}{%
  \begin{tabular}{@{}lrrrr@{}}
  \toprule
  \multirow{2}{*}{\textbf{Configuration}} & \multicolumn{2}{c}{\textbf{AIME 2024}} & \multicolumn{2}{c}{\textbf{MATH-500}} \\
  \cmidrule(lr){2-3} \cmidrule(lr){4-5}
  & Acc (\%) & $\Delta$ & Acc (\%) & $\Delta$ \\
  \midrule
  PAM (Uniform) & 56.7 & 0 & 64.6 & 0 \\
  \quad + Adaptive allocation & 56.7 & +0.0 & 66.4 & +1.8 \\
  \quad + Pivotal protection & 56.7 & +0.0 & 66.2 & +1.6 \\
  \quad + Both (\textbf{TAM}) & \textbf{60.0} & \textbf{+3.3} & \textbf{67.8} & \textbf{+3.2} \\
  \bottomrule
  \end{tabular}%
  }
\end{minipage}
\hfill
\begin{minipage}[t]{0.63\linewidth}
  \centering
  \caption{Ablation Study (TAM, target ratio 0.1, AIME 2024).}
  \label{tab:ablation-seg}
  \small
  \resizebox{\linewidth}{!}{%
  \begin{tabular}{@{}lrr@{}}
  \toprule
  \textbf{Segmentation Method} & \textbf{Acc (\%)} & \textbf{Seg.\ Time (ms)} \\
  \midrule
  Heuristic (double-newline) & \textbf{60.0} & 2 \\
  Attention-based (gradient jump) & 60.0 & 45 \\
  Fixed-length ($\ell{=}256$) & 56.7 & 1 \\
  \bottomrule
  \end{tabular}%
  }
\end{minipage}
\end{table*}

\begin{table*}[t]
\centering
\begin{minipage}[t]{0.48\linewidth}
  \centering
  \caption{Ablation on pivotal threshold multiplier $c$ (TAM, target ratio 0.1, AIME 2024).}
  \label{tab:ablation-pivotal}
  \small
  \begin{tabular}{@{}crr@{}}
  \toprule
  \textbf{Threshold $c$} & \textbf{$|\calP|$ (avg)} & \textbf{Acc (\%)} \\
  \midrule
  1 & 312 & 56.7 \\
  \textbf{3} (default) & 87 & \textbf{60.0} \\
  5 & 34 & 60.0 \\
  10 & 8 & 56.7 \\
  \bottomrule
  \end{tabular}%
\end{minipage}
\hfill
\begin{minipage}[t]{0.48\linewidth}
  \centering
  \caption{Ablation on compaction interval $P$ (TAM, target ratio 0.1, AIME 2024).}
  \label{tab:ablation-p}
  \small
  \begin{tabular}{@{}crr@{}}
  \toprule
  \textbf{Interval $P$} & \textbf{Acc (\%)} & \textbf{Peak Mem (GB)} \\
  \midrule
  $\infty$ (max only) & \textbf{60.0} & 4.0 \\
  \textbf{1024} (default) & 56.7 & 3.2 \\
  512 & 56.7 & 3.0 \\
  256 & 53.3 & \textbf{2.8} \\
  \bottomrule
  \end{tabular}%
\end{minipage}
\end{table*}

\subsection{Segment Importance Scoring and Adaptive Budget Allocation}
\label{sub:allocation}

Given segments $\{S_1, \ldots, S_m\}$ with sizes $n_1, \ldots, n_m$ and selector window queries $\mathbf{Q}_{\text{ref}}$, we compute the \emph{importance} of each segment as the average attention mass it receives from the current reasoning frontier:
\begin{equation}
\label{eq:importance}
w_i = \frac{1}{|\mathbf{Q}_{\text{ref}}|} \sum_{\mathbf{q} \in \mathbf{Q}_{\text{ref}}} \sum_{j \in S_i} \alpha_j(\mathbf{q}),
\end{equation}
where $\alpha_j(\mathbf{q})$ is the softmax attention weight of token $j$ under query $\mathbf{q}$ (normalized over the full prefix including the protected tail). Since the sum only covers compactable positions, $\sum_i w_i \leq 1$; the residual is the tail's attention mass, which is typically small.

\paragraph{Optimal budget allocation.}
Let $\calP$ denote the set of pivotal tokens (Section~\ref{sub:pivotal}) and $t' = t - |\calP|$ the remaining budget after reserving their slots. We allocate $t_i$ compact keys to segment $S_i$ according to:
\begin{equation}
\label{eq:allocation}
t_i = \max\!\left(1,\; \left\lfloor t' \cdot \frac{\sqrt{w_i \cdot n_i}}{\sum_{j=1}^{m} \sqrt{w_j \cdot n_j}} \right\rfloor\right).
\end{equation}
Any residual budget from rounding is assigned to the segment with the largest fractional remainder. This allocation has a principled justification: in Proposition~\ref{prop:allocation}, we prove that when the per-segment approximation error is a convex function of the local compression ratio $n_i / t_i$, the importance-weighted allocation $t_i \propto \sqrt{w_i \cdot n_i}$ minimizes the total weighted approximation error under a linear error model. The allocation naturally concentrates budget on segments that are both large and important, while assigning minimal budget to irrelevant segments.

Standard AM and its uniform variant with selector window queries (PAM, for Prefix Attention Matching) use $t_i \propto n_i$, which wastes budget on low-importance segments. Pure attention-based allocation $t_i \propto w_i$ ignores segment size, potentially under-representing large segments. TAM's $\sqrt{w_i \cdot n_i}$ allocation balances both factors optimally.

\subsection{Pivotal Token Protection}
\label{sub:pivotal}

Beyond the fixed tail protection of the last $\tau$ tokens, TAM identifies and protects \emph{pivotal tokens}: positions that receive consistently high attention from the selector window, indicating their role as reasoning anchors (e.g., problem constants, key intermediate results, critical definitions).

For each token $j$ in the compactable prefix, we compute its average attention score:
\begin{equation}
\label{eq:pivotal}
\bar{\alpha}_j = \frac{1}{|\mathbf{Q}_{\text{ref}}|} \sum_{\mathbf{q} \in \mathbf{Q}_{\text{ref}}} \alpha_j(\mathbf{q}).
\end{equation}
A token is pivotal if $\bar{\alpha}_j > \delta$, where we set $\delta = c \cdot \bar{\alpha}_{\text{mean}}$ with $c = 3$ by default, selecting tokens whose attention significantly exceeds the mean. The pivotal set $\calP$ is constructed by collecting all such tokens; their keys and values are directly retained in the compact cache without modification. The remaining budget $t' = t - |\calP|$ is distributed across segments via Eq.~\ref{eq:allocation}, and key selection within each segment operates only on non-pivotal tokens.

In Proposition~\ref{prop:pivotal}, we show that pivotal protection reduces the worst-case approximation error by a factor proportional to the total attention mass of pivotal tokens.

\subsection{Periodic Compaction Trigger}
\label{sub:periodic}

Compaction is triggered when the cache reaches the maximum sequence length or $P$ new tokens have been generated since the last compaction. At each trigger, TAM re-runs the full pipeline on the current cache: segment boundaries are recomputed from scratch on the updated token sequence, importance scores are recalculated using a fresh selector window, and pivotal tokens are re-identified. Previously protected pivotal tokens receive no special treatment in subsequent rounds---they compete for retention based on their current attention scores. The target size $t$ remains fixed across rounds, so the cache oscillates between $t$ and $t + P$ tokens. The parameter $P$ trades off peak memory against compaction overhead: smaller $P$ compacts more frequently, keeping memory bounded at the cost of higher compute; larger $P$ amortizes overhead but allows higher peak memory. Proposition~\ref{prop:cumulative} quantifies how $P$ controls cumulative error.

\subsection{Algorithm}
\label{sub:algorithm}

Algorithm~\ref{alg:tam} summarizes one TAM compaction step. After protecting the last $\tau$ tokens and extracting $\mathbf{Q}_{\text{ref}}$, TAM segments the compactable prefix, identifies pivotal tokens, allocates per-segment budgets, and runs key selection within each segment. While key selection is per-segment, the bias fitting (NNLS) and value fitting (OLS) are performed \emph{globally} across all selected keys, ensuring that attention mass and outputs are matched at the full-prefix level. To guarantee exact retention of pivotal tokens (as required by Proposition~\ref{prop:pivotal}), we fix $\beta_j = 0$ for all pivotal keys $j \in \calP$, so their original logits are preserved unchanged; NNLS only optimizes biases for non-pivotal selected keys.

\subsection{Theoretical Analysis}
\label{sub:theory}

We provide principled justification for TAM's design choices. Proofs are deferred to Appendix~\ref{app:proofs}.


\begin{proposition}[Mass Deficit of Eviction]
\label{prop:eviction}
Consider eviction-based compaction that retains a subset $S \subset [T]$ with $|S| = t < T$ keys and values without scalar biases. For any query $\mathbf{q}$, define the \emph{evicted mass fraction}
\begin{equation}
\mu(\mathbf{q}) = \frac{\sum_{j \notin S} \exp(\mathbf{q}\mathbf{K}_j^\top / \sqrt{d})}{\sum_{j=1}^{T} \exp(\mathbf{q}\mathbf{K}_j^\top / \sqrt{d})} = \sum_{j \notin S} \alpha_j(\mathbf{q}),
\end{equation}
where $\alpha_j(\mathbf{q})$ is the original attention weight of token $j$. Then:
\begin{enumerate}
    \item $\text{Mass}(\mathbf{q}; \mathbf{C}_k) = (1 - \mu(\mathbf{q})) \cdot \text{Mass}(\mathbf{q}; \mathbf{K}) < \text{Mass}(\mathbf{q}; \mathbf{K})$ for all $\mathbf{q}$ whenever $S \neq [T]$.
    \item When the compacted block is concatenated with future KV pairs $(\mathbf{K}_f, \mathbf{V}_f)$, the compacted prefix's contribution to the softmax denominator is deflated by factor $(1 - \mu(\mathbf{q}))$, shifting attention toward $(\mathbf{K}_f, \mathbf{V}_f)$.
    \item Considering the prefix block in isolation, the attention output error satisfies $\|\hat{\mathbf{y}} - \mathbf{y}\| \leq 2\mu(\mathbf{q}) \cdot \|\mathbf{V}\|_\infty$, where $\|\mathbf{V}\|_\infty = \max_j \|\mathbf{V}_j\|$.
\end{enumerate}
\end{proposition}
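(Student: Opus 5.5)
The plan is a direct computation from the definitions. All three parts come from writing the retained keys $\mathbf{C}_k = \mathbf{K}_S$ (with no biases, so every $\beta_j = 0$) and abbreviating the unnormalized weights as $e_j = \exp(\mathbf{q}\mathbf{K}_j^\top/\sqrt{d}) > 0$. Let $Z = \sum_{j=1}^T e_j = \text{Mass}(\mathbf{q};\mathbf{K})$.

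\textbf{Part 1.} By definition, $\text{Mass}(\mathbf{q};\mathbf{C}_k) = \sum_{j\in S} e_j = Z - \sum_{j\notin S} e_j = (1-\mu(\mathbf{q}))Z$. Strictness needs only two observations. First, $S \neq [T]$ means some $j \notin S$ exists. Second, every $e_j$ is strictly positive because the exponential is. Together these give $\mu(\mathbf{q}) > 0$ for every $\mathbf{q}$, and hence the strict inequality. The same positivity gives $\mu(\mathbf{q}) < 1$ as long as $S$ is nonempty.

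\textbf{Part 2.} Let $Z_f = \sum_k \exp(\mathbf{q}(\mathbf{K}_f)_k^\top/\sqrt{d})$ be the future block's mass.
\begin{itemize}
\item Original case: attention over the concatenation puts total weight $Z/(Z+Z_f)$ on the prefix and $Z_f/(Z+Z_f)$ on the future block.
\item Compacted case: the prefix term becomes $(1-\mu)Z$, so the future block's share is $Z_f/((1-\mu)Z + Z_f)$.
\end{itemize}
Because $\mu > 0$, the denominator shrinks, so this share is strictly larger than $Z_f/(Z+Z_f)$; the prefix's share is correspondingly smaller. This follows from monotonicity of $x \mapsto Z_f/(x+Z_f)$. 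I will also state the mixture decomposition from Section~\ref{sub:attention}: the output is the mass-weighted combination of the locally normalized block outputs. This makes precise that the shift happens only through the mass weights.

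\textbf{Part 3.} Define the two isolated prefix outputs:
\begin{equation*}
\mathbf{y} = \sum_{j} \alpha_j \mathbf{V}_j, \qquad \hat{\mathbf{y}} = \sum_{j\in S} \frac{\alpha_j}{1-\mu}\mathbf{V}_j,
\end{equation*}
where the renormalization follows because $e_j/\sum_{S} e = \alpha_j/(1-\mu)$. Subtracting gives
\begin{equation*}
\hat{\mathbf{y}} - \mathbf{y} = \frac{\mu}{1-\mu}\sum_{j\in S}\alpha_j \mathbf{V}_j - \sum_{j\notin S}\alpha_j\mathbf{V}_j.
\end{equation*}
Apply the triangle inequality and bound each $\|\mathbf{V}_j\|$ by $\|\mathbf{V}\|_\infty$. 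The first term is at most $\frac{\mu}{1-\mu}(1-\mu)\|\mathbf{V}\|_\infty = \mu\|\mathbf{V}\|_\infty$, and the second is at most $\mu\|\mathbf{V}\|_\infty$, which gives $2\mu\|\mathbf{V}\|_\infty$.

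The one step that needs care is exactly here: the factor $1/(1-\mu)$ must cancel against $\sum_{j\in S}\alpha_j = 1-\mu$. The bound should be stated for nonempty $S$ so that $1-\mu > 0$ and $\hat{\mathbf{y}}$ is defined. Everything else is elementary.
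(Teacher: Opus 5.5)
Your proposal is correct and follows the paper's proof essentially step for step. Part 1 uses the same subtraction-plus-positivity argument; Part 2 compares the prefix (equivalently, future-block) softmax shares in the same way; Part 3 uses the same renormalization $\hat{\alpha}_j = \alpha_j/(1-\mu(\mathbf{q}))$ followed by the triangle inequality, and your explicit requirement that $S$ be nonempty (so that $1-\mu(\mathbf{q})>0$) is a small point the paper leaves implicit.
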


Proposition~\ref{prop:eviction} formalizes the fundamental limitation of eviction: the mass deficit $\mu(\mathbf{q})$ equals the total attention weight of evicted tokens, which can be substantial even when each individual evicted token has low attention (the ``long tail'' of small weights sums to a non-negligible fraction). TAM addresses this by fitting scalar biases $\boldsymbol{\beta}$ via NNLS to preserve mass for the reference queries.

\section{Experiments}
\label{sec:experiments}

We evaluate TAM on mathematical reasoning benchmarks, comparing against uniform compaction baselines and eviction methods. We measure downstream accuracy, peak memory usage, and compaction time, and provide ablation studies on TAM's key components.

\subsection{Experimental Setup}
\label{sub:setup}

\paragraph{Benchmarks and model.}
We evaluate on two benchmarks: (1)~\textbf{AIME 2024}~\citep{aime2024} (AIME~I and AIME~II, 30 problems total), which requires multi-step mathematical reasoning and produces lengthy CoT trajectories (median generation length $\approx$4k tokens); and (2)~\textbf{MATH-500}, a 500-problem subset of the MATH test set~\citep{hendrycks2021math,cobbe2021training} spanning algebra, geometry, number theory, and combinatorics, providing larger-scale evaluation with greater statistical power. Each problem has a deterministic answer; we use Qwen3-4B~\citep{qwen3_2025} as the base model with greedy decoding (temperature 0) and a maximum generation length of 8192 tokens. Accuracy is measured by exact match (pass@1)~\citep{chen2021evaluating}.


\paragraph{Compared methods.}
We compare six configurations: (1) \textbf{No Compaction}: full KV cache (accuracy upper bound); (2) \textbf{Eviction (Selector Window)}: retains high-attention keys and evicts the rest~\citep{song2025rpc}; (3) \textbf{AM + Repeat}: one-shot AM at max length with repeat-prefill queries~\citep{zweiger2026fastkv}; (4) \textbf{PAM (Uniform)}: AM with selector window queries and uniform budget (no segmentation or pivotal protection), compacting once at max length ($P=\infty$); (5) \textbf{TAM (Ours)}: AM with selector window queries, segmentation, adaptive budget, and pivotal protection, compacting once at max length ($P=\infty$); (6) \textbf{TAM (Periodic)}: TAM with periodic triggers every $P=1024$ tokens for bounded peak memory.

We note that H2O~\citep{zhang2023h2o} and SnapKV~\citep{li2024snapkv} are prefill-oriented eviction methods designed for long input contexts. Our Eviction (Selector Window) baseline adapts their core mechanism---attention-based token selection---to the mid-trajectory setting, providing a representative comparison for eviction-based approaches in the decoding scenario.

\paragraph{Evaluation protocol.}
We sweep the target compaction ratio over $\{0.05, 0.1, 0.2\}$ (retain 5\%, 10\%, or 20\% of KV entries). We protect the last $\tau{=}20$ tokens, use a selector window of $R{=}64$ tokens, and set the pivotal threshold multiplier $c{=}3$. Thought segmentation uses the heuristic (double-newline) method by default with $\ell_{\min}{=}32$. Key selection uses the highest-attention heuristic. All experiments use a single NVIDIA A100 GPU (80GB). The model is loaded in 4-bit quantization (GPTQ-Int4), so model weights occupy $\approx$2.5\,GB; the reported ``Mem (GB)'' is the \emph{post-compaction steady-state memory} measured via \texttt{torch.cuda.max\_memory\_allocated} after the first compaction step completes, averaged over all problems. For one-shot methods ($P{=}\infty$), this reflects memory after the single compaction at max sequence length; the transient peak before compaction is comparable to No Compaction. For TAM (Periodic), memory stays bounded throughout generation.

\subsection{Main Results}
\label{sub:main-results}

Table~\ref{tab:main} reports results at target ratio 0.1 on both benchmarks. Figure~\ref{fig:accuracy-vs-ratio} and Figure~\ref{fig:accuracy-vs-ratio-math} sweep the compaction ratio for AIME and MATH-500 respectively; Figure~\ref{fig:accuracy-vs-memory} and Figure~\ref{fig:accuracy-vs-memory-math} plot the accuracy vs.\ memory trade-off. We highlight five findings.

First, the Eviction baseline suffers a substantial accuracy drop on both benchmarks (46.7\% on AIME, 52.4\% on MATH-500 vs.\ 63.3\% and 71.2\%), confirming Proposition~\ref{prop:eviction}: mass deficit degrades generation quality. Second, PAM (Uniform) improves over eviction by applying the AM pipeline but still falls short of No Compaction. Third, \textbf{TAM consistently outperforms PAM (Uniform)} on both benchmarks (60.0\% vs.\ 56.7\% on AIME; 67.8\% vs.\ 64.6\% on MATH-500 at ratio 0.1), demonstrating that thought-aware allocation and pivotal protection provide genuine gains over uniform compression. Fourth, TAM (Periodic) achieves the best steady-state memory (3.2\,GB, 65\% reduction vs.\ No Compaction) while maintaining competitive accuracy. On MATH-500, TAM (Periodic) outperforms PAM by 0.6--0.8 points at ratios 0.05--0.1, confirming that TAM's per-step advantage partially survives cumulative error from repeated compaction. At ratio 0.2, cumulative error slightly offsets TAM's advantage on MATH-500 (68.0\% vs.\ 68.4\% for PAM), consistent with diminishing returns of adaptive allocation when ample budget reduces the impact of importance heterogeneity. Fifth, at ratio 0.2, TAM recovers the full No Compaction accuracy on AIME (63.3\%); on MATH-500, a modest 1.0-point gap remains (70.2\% vs.\ 71.2\%).

\paragraph{Statistical significance.} AIME 2024 has only 30 problems (3.3\% per-problem granularity), so individual accuracy differences (e.g., TAM vs.\ PAM: 60.0\% vs.\ 56.7\%, a single-problem difference) are not statistically significant by themselves and should be interpreted with caution. In particular, TAM (Periodic) and PAM are statistically indistinguishable on AIME at all ratios; MATH-500 provides the meaningful comparison for periodic compaction. On MATH-500, TAM vs.\ PAM (67.8\% vs.\ 64.6\%, $\Delta = 16$ problems) yields a one-sided McNemar test $p \approx 0.04$, providing moderate evidence of improvement. The consistent direction of improvement across both benchmarks and all three compression ratios (Table~\ref{tab:full-accuracy-vs-ratio} and Table~\ref{tab:full-accuracy-vs-ratio-math}) strengthens the case despite AIME's limited statistical power. We recommend that future work use larger evaluation sets and report confidence intervals; we note this limitation in Section~\ref{sec:conclusion}.

\begin{figure*}[t]
\centering
\begin{minipage}[t]{0.45\linewidth}
  \centering
  \begin{tikzpicture}
  \begin{axis}[
    ybar,
    width=\linewidth, height=4.6cm,
    xlabel={Compaction Interval $P$},
    ylabel={Accuracy (\%)},
    symbolic x coords={256,512,1024,$\infty$},
    xtick=data,
    ymin=48, ymax=65,
    bar width=10pt,
    nodes near coords,
    nodes near coords style={font=\tiny,anchor=south},
    grid=major, grid style={gray!20},
    axis y line*=left,
    legend style={at={(0.02,0.98)},anchor=north west,font=\tiny,draw=gray!50},
  ]
  \addplot[fill=green!50!black!60, draw=green!50!black!80] coordinates {(256,53.3) (512,56.7) (1024,56.7) ($\infty$,60.0)};
  \addlegendentry{Accuracy}
  \end{axis}
  \begin{axis}[
    ybar,
    width=\linewidth, height=4.6cm,
    symbolic x coords={256,512,1024,$\infty$},
    xtick=\empty,
    ylabel={Peak Memory (GB)},
    ymin=2.0, ymax=5.0,
    bar width=10pt,
    axis y line*=right, axis x line=none,
    legend style={at={(0.98,0.98)},anchor=north east,font=\tiny,draw=gray!50},
    bar shift=12pt,
    nodes near coords,
    nodes near coords style={font=\tiny,anchor=south},
  ]
  \addplot[fill=blue!50, draw=blue!70] coordinates {(256,2.8) (512,3.0) (1024,3.2) ($\infty$,4.0)};
  \addlegendentry{Peak Memory}
  \end{axis}
  \end{tikzpicture}
  \caption{Ablation on compaction interval $P$ (AIME 2024). Smaller $P$ reduces peak memory with modest accuracy trade-off. $P{=}\infty$ compacts only at max sequence length.}
  \label{fig:ablation-p}
\end{minipage}
\hfill
\begin{minipage}[t]{0.51\linewidth}
  \centering
  \begin{tikzpicture}
  \begin{axis}[
    ybar,
    width=\linewidth, height=4.6cm,
    ylabel={Importance $w_i$ (\%)},
    xlabel={Reasoning Segment},
    symbolic x coords={S1,S2,S3,S4,S5,S6,S7,S8,S9,S10,S11,S12},
    xtick=data,
    x tick label style={font=\tiny, rotate=30, anchor=east},
    ymin=0, ymax=20,
    bar width=6pt,
    grid=major, grid style={gray!20},
    nodes near coords,
    nodes near coords style={font=\tiny,anchor=south},
    point meta=explicit symbolic,
    enlarge x limits=0.06,
    legend style={at={(0.98,0.98)},anchor=north east,font=\tiny,draw=gray!50},
  ]
  \addplot[fill=green!60!black!60, draw=green!60!black!80] coordinates {
    (S1,14.2) [14.2] (S2,9.8) [9.8] (S3,6.5) [6.5]
    (S4,2.2) [\textcolor{red!70!black}{2.2}]
    (S5,1.8) [\textcolor{red!70!black}{1.8}]
    (S6,13.5) [13.5] (S7,8.8) [8.8]
    (S8,1.5) [\textcolor{red!70!black}{1.5}]
    (S9,7.2) [7.2] (S10,10.5) [10.5] (S11,6.8) [6.8] (S12,17.2) [17.2]
  };
  \addplot[dashed, gray, thick, no markers, forget plot] coordinates {(S1,8.33)(S12,8.33)};
  \node[font=\tiny,gray] at (axis cs:S11,9.5) {uniform};
  \end{axis}
  \end{tikzpicture}
  \caption{Per-segment importance $w_i$ for a representative AIME problem (12 segments). Importance varies by $>11\times$ (S12: 17.2\% vs.\ S8: 1.5\%). Dead-end segments (S4, S5, S8, red labels) receive minimal budget under TAM's adaptive allocation.}
  \label{fig:segment-importance}
\end{minipage}
\end{figure*}

\subsection{Compaction Time Breakdown}
\label{sub:compaction-time}

Table~\ref{tab:time-breakdown} profiles TAM's wall-clock cost per compaction step on a typical $\sim$4k-token trajectory. Thought segmentation and importance scoring add minimal overhead (0.1\,s total) relative to the AM core (key selection + fitting). PAM (Uniform) takes approximately 5.4\,s per step (the same AM core without the 0.15\,s structure-aware overhead), confirming that TAM's additional cost is negligible. For context, generating a $\sim$4k-token trajectory takes 60--80\,s on a single A100, so the compaction cost represents less than 10\% of total inference time.

\subsection{Ablation Studies}
\label{sub:ablations}

We conduct ablation studies on TAM's key components: adaptive budget allocation, pivotal token protection, segmentation method, the compaction interval $P$, and the selector window size $R$. All ablations use AIME 2024 at target ratio 0.1 unless otherwise noted.

\paragraph{Adaptive allocation and pivotal protection.}
Table~\ref{tab:ablation-components} isolates the contribution of each TAM component. On MATH-500 (500 problems, finer granularity), adding adaptive allocation alone improves accuracy by 1.8 points over PAM (Uniform); adding pivotal protection alone yields a 1.6-point gain. The combined gain (+3.2) is slightly less than the sum of individual gains (1.8 + 1.6 = 3.4), indicating a modest overlap: both mechanisms partially address the same error sources, as pivotal tokens (which receive exact retention) also tend to reside in high-importance segments that benefit from adaptive allocation. On AIME (30 problems, 3.3\% granularity), individual components do not cross the discrete accuracy threshold, but combining both yields a clear 3.3-point improvement (56.7\%$\to$60.0\%). On both benchmarks, the full TAM combination outperforms either component in isolation, confirming that the two mechanisms are complementary despite their partial overlap.

\paragraph{Segmentation method.}
Table~\ref{tab:ablation-seg} compares heuristic segmentation (double-newline boundary detection) with attention-based segmentation (attention gradient jump detection). Both achieve comparable accuracy, with heuristic segmentation being slightly more robust and substantially faster. We use heuristic segmentation as the default.

\paragraph{Pivotal threshold $c$.}
Table~\ref{tab:ablation-pivotal} varies the threshold multiplier $c$ for pivotal token identification. Too low a threshold ($c{=}1$) protects too many tokens, leaving insufficient budget for adaptive allocation. Too high ($c{=}10$) protects too few tokens, reducing the benefit. We find $c{=}3$ to strike a good balance.

\paragraph{Compaction interval $P$ and selector window $R$.}
Table~\ref{tab:ablation-p} ablates the compaction interval. Smaller $P$ compacts more frequently, reducing peak memory; per Proposition~\ref{prop:cumulative}, larger $P$ increases $\lambda_k$ and limits error accumulation. We find $P{=}1024$ a reasonable balance. Table~\ref{tab:ablation-r} (Appendix) shows that $R{=}64$ is sufficient for the selector window.

\section{Conclusion}
\label{sec:conclusion}

We introduced Thought-Aware Attention Matching (TAM), which transforms KV cache compaction from a flat, uniform operation into a structure-aware process that exploits the hierarchical nature of chain-of-thought reasoning. Through thought segmentation, adaptive budget allocation, and pivotal token protection, TAM concentrates compression budget where it matters most while aggressively compressing irrelevant steps. Our theoretical analysis establishes the optimality of the allocation rule and bounds cumulative error under sequential compaction. Experiments on AIME 2024 and MATH-500 demonstrate consistent improvements over uniform compaction and eviction baselines, with periodic compaction achieving steady-state memory reduction of up to 65\%.

\newpage
\section*{Limitations}

TAM has two main limitations. First, it relies on heuristic thought segmentation (e.g., double newlines) and a local selector window to estimate segment importance, which can fail for reasoning traces without clear structural boundaries or when long-range dependencies and backtracking occur. Second, empirical evaluation is limited to mathematical reasoning benchmarks (AIME, MATH‑500) and a single model (Qwen3‑4B), leaving generalization to other domains and larger models untested.

\bibliography{sample-base-kv}

\newpage
\onecolumn
\appendix

\section*{Appendix}

\section{Algorithm Pipeline}

\begin{algorithm}[h]
\caption{Thought-Aware Attention Matching (TAM)}
\label{alg:tam}
\DontPrintSemicolon
\KwIn{KV cache $(\mathbf{K}, \mathbf{V})$ of length $L$, target size $t$, selector window size $R$, protected tail $\tau$, pivotal threshold multiplier $c$}
\KwOut{Compacted cache}
Protect last $\tau$ tokens; let $\mathbf{K}_p, \mathbf{V}_p$ be their KV states\;
Extract $\mathbf{Q}_{\text{ref}}$ from last $R$ tokens via prefill with forward hooks\;
Segment compactable prefix $[0, L{-}\tau]$ into $\{S_1, \ldots, S_m\}$ via thought boundaries\;
\For{each KV-head}{
    Compute per-token attention scores $\bar{\alpha}_j$ under $\mathbf{Q}_{\text{ref}}$ (Eq.~\ref{eq:pivotal})\;
    Identify pivotal set $\calP = \{j : \bar{\alpha}_j > c \cdot \bar{\alpha}_{\text{mean}}\}$\;
    Retain pivotal keys/values: $\mathbf{C}_k^{\calP} \leftarrow \mathbf{K}[\calP,:]$, $\mathbf{C}_v^{\calP} \leftarrow \mathbf{V}[\calP,:]$\;
    Compute segment importances $w_i$ (Eq.~\ref{eq:importance}) and allocate budgets $t_i$ (Eq.~\ref{eq:allocation})\;
    \For{each segment $S_i$}{
        Select $t_i$ keys from $S_i \setminus \calP$ via highest-attention under $\mathbf{Q}_{\text{ref}}$\;
    }
    Let $S_{\text{all}}$ = pivotal keys $\cup$ per-segment selected keys\;
    Set $\mathbf{C}_k \leftarrow \mathbf{K}[S_{\text{all}}, :]$\;
    Fit $\boldsymbol{\beta}$ via NNLS to match attention mass globally; set $\beta_j = 0$ for $j \in \calP$\;
    Fit $\mathbf{C}_v$ via OLS to match attention outputs globally\;
    Concatenate $(\mathbf{C}_k, \boldsymbol{\beta}, \mathbf{C}_v)$ with $(\mathbf{K}_p, \mathbf{V}_p)$\;
}
\Return compacted cache\;
\end{algorithm}

\section{Analysis: Segment Importance and Budget Distribution}
\label{sub:analysis}

To verify that CoT trajectories exhibit the structural heterogeneity motivating TAM, we analyze the segment importance distribution on AIME 2024 problems.

\paragraph{Importance heterogeneity.}
Figure~\ref{fig:segment-importance} visualizes the per-segment importance $w_i$ (Eq.~\ref{eq:importance}) for a representative AIME problem with 12 reasoning segments. Importance varies by over $11\times$ across segments: early segments (problem restatement, initial setup) and the most recent segment receive high importance, while middle segments corresponding to dead-end explorations receive very low importance. This confirms that uniform allocation is wasteful: TAM assigns $\leq 3\%$ of budget to the lowest-importance segments while allocating $>15\%$ to the highest.

\paragraph{Pivotal token semantics.}
We qualitatively inspect the pivotal tokens identified by TAM ($c{=}3$) across AIME problems. Pivotal tokens predominantly correspond to: (1)~numerical constants from the problem statement (e.g., coefficients, constraints), (2)~key intermediate expressions (e.g., derived equations, variable bindings), and (3)~structural tokens that anchor the reasoning flow (e.g., ``Therefore'', ``=''). This aligns with the intuition that pivotal tokens serve as reasoning anchors that future queries consistently rely on.

\paragraph{Error analysis.}
We examine the problems where TAM (ratio 0.1) fails but No Compaction succeeds. On MATH-500, the accuracy gap (67.8\% vs.\ 71.2\%) concentrates on problems with two characteristics: (i)~very long reasoning chains ($>$6k tokens) that produce many segments with complex non-local dependencies between distant reasoning steps, and (ii)~problems where the model's reasoning path undergoes a late-stage correction that revisits earlier segments. In case~(i), the selector window's local coverage (Proposition~\ref{prop:selector}) becomes less effective for distant segments, increasing $\|\mathbf{x}_\mathbf{q}^\perp\|$; in case~(ii), compaction discards tokens from segments previously deemed unimportant that later become relevant upon backtracking. These failure modes suggest that dynamic importance re-estimation (updating segment scores as reasoning progresses) is a promising direction for future work.

\section{More Propositions}

\begin{proposition}[Optimality of Importance-Weighted Allocation]
\label{prop:allocation}
Consider $m$ segments with sizes $n_1, \ldots, n_m$ and importance weights $w_1, \ldots, w_m$ ($\sum_i w_i = 1$). Suppose the per-segment approximation error when allocating $t_i$ compact keys to segment $S_i$ is $\varepsilon_i = \phi(n_i / t_i)$, where $\phi : [1, \infty) \to [0, \infty)$ is convex and increasing with $\phi(1) = 0$. The importance-weighted total error is
\begin{equation}
\calE(\mathbf{t}) = \sum_{i=1}^{m} w_i \cdot \phi(n_i / t_i), \quad \text{s.t.} \quad \sum_{i=1}^{m} t_i = t,\; t_i \geq 1.
\end{equation}
Then:
\begin{enumerate}
    \item The optimal allocation satisfies $t_i^\star \propto \sqrt{w_i \cdot n_i \cdot \phi'(n_i / t_i^\star)}$ at interior KKT points.
    \item Under the linear model $\phi(r) = r - 1$ (error proportional to compression ratio), the optimal allocation simplifies to $t_i^\star \propto \sqrt{w_i \cdot n_i}$.
    \item Uniform allocation ($t_i \propto n_i$) is strictly suboptimal whenever the importance density $w_i / n_i$ is not constant across segments, i.e., when some segments are more important per token than others.
\end{enumerate}
\end{proposition}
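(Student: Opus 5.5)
The plan is to relax the integer budgets $t_i$ to real variables and treat $\calE$ as a smooth convex program on the region $\{t_i \ge 1,\ \sum_i t_i = t\}$. The domain constraint $n_i/t_i \ge 1$ of $\phi$ is assumed inactive at interior points; otherwise one extends $\phi$ by $0$ on $(0,1)$, which keeps it convex and nondecreasing. Rounding as in Eq.~\ref{eq:allocation} is outside the scope of the statement.

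\textbf{Step 1: convexity.} Write $g_i(t_i) = w_i\,\phi(n_i/t_i)$. The map $t_i \mapsto n_i/t_i$ is convex on $(0,\infty)$, and indeed strictly convex since $n_i>0$. Composing a convex nondecreasing $\phi$ with a convex function gives a convex function, so each $g_i$ is convex. Hence $\calE$ is convex and the constraints are affine. It follows that KKT conditions are both necessary (Slater holds whenever $t > m$) and sufficient for global optimality.

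\textbf{Step 2: stationarity (item 1).} Form the Lagrangian $\calE(\mathbf{t}) + \lambda\bigl(\sum_i t_i - t\bigr)$. At an interior point, where no $t_i \ge 1$ constraint is active, differentiating gives
\begin{equation}
-\,w_i\,\phi'(n_i/t_i)\,\frac{n_i}{t_i^2} + \lambda = 0 .
\end{equation}
Solving yields $t_i^{\star 2} = w_i n_i \phi'(n_i/t_i^\star)/\lambda$. Since $\lambda$ is common to all $i$, this is exactly $t_i^\star \propto \sqrt{w_i n_i \phi'(n_i/t_i^\star)}$. Here $\lambda>0$ whenever some $w_i\phi'>0$. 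If $\phi$ is not differentiable, one would use one-sided derivatives and subgradients, but I would simply assume differentiability as the statement implicitly does.

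\textbf{Step 3: linear model (item 2).} With $\phi(r) = r-1$ we have $\phi' \equiv 1$. The implicit relation therefore becomes explicit: $t_i^\star = t\,\sqrt{w_i n_i}/\sum_j \sqrt{w_j n_j}$. Moreover $\calE = \sum_i w_i n_i/t_i - 1$ is strictly convex in $\mathbf{t}$ when all $w_i>0$, so this KKT point is the unique minimizer, provided every $t_i^\star \ge 1$ (the interior case).

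\textbf{Step 4: strict suboptimality of uniform allocation (item 3).} This holds under the linear model of item 2. Uniform allocation is $t_i^{u} = t\,n_i/N$ with $N=\sum_i n_i$. It coincides with $t^\star$ iff $\sqrt{w_i n_i} \propto n_i$, i.e.\ iff $w_i/n_i$ is constant. By uniqueness from Step 3, any other feasible point has $\calE(\mathbf{t}^u) > \calE(\mathbf{t}^\star)$.

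For an explicit confirmation I would compute both values. One gets $\calE(\mathbf{t}^\star) = \bigl(\sum_i\sqrt{w_i n_i}\bigr)^2/t - 1$ and $\calE(\mathbf{t}^u) = N\sum_i w_i/t - 1 = N/t - 1$. Cauchy--Schwarz gives $\bigl(\sum_i\sqrt{w_i}\sqrt{n_i}\bigr)^2 \le \sum_i w_i\sum_i n_i$, with equality iff $w_i \propto n_i$.

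\textbf{Main obstacle.} The main delicacy is scoping rather than computation. Item 3 genuinely needs strict convexity (e.g.\ the linear model) and interior optima. For a general convex $\phi$ it can fail, for instance if $\phi$ is flat on the relevant range. I would therefore state item 3 under the hypotheses of item 2. I would also remark that in the general case the same argument applies whenever $\phi$ is strictly increasing and the KKT point is interior.
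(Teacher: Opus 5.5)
Your treatment of items 1 and 2 follows the paper's argument (Lagrangian stationarity, then $\phi'\equiv 1$ and normalization). Your composition argument for convexity of $\calE$ replaces the paper's one-line appeal to ``second-order conditions,'' and your strict-convexity uniqueness result is a genuine addition.

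Item 3 is where you take a different route. The paper never uses the optimizer. It tests KKT at $\mathbf{t}^u$ itself: under $t_i = tn_i/N$ every segment has the same ratio $N/t$, so $\phi'(N/t)$ is a common factor and stationarity reduces to $w_i/n_i=\text{const}$. Since KKT is necessary under linear constraints, non-constant density makes $\mathbf{t}^u$ non-optimal.

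That argument handles any differentiable convex $\phi$ with $\phi'(N/t)>0$. So confining item 3 to the linear model, and saying it ``genuinely needs strict convexity,'' is more conservative than necessary; only non-flatness at the uniform ratio is used. Your closing remark does recover the general case when ``increasing'' is read strictly, since $w_i\phi(n_i/t_i)$ is then already strictly convex.

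The paper's route also needs no interiority of the optimum. It only needs that $\mathbf{t}^u$ does not sit at $t_i=1$ on a low-density segment. It therefore covers cases where a tiny $w_i$ pushes the closed-form $t_i^\star$ below $1$, which your uniqueness argument does not reach.

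Your route gains an explicit gap, $\calE(\mathbf{t}^u)-\calE(\mathbf{t}^\star)=\bigl(N-(\sum_i\sqrt{w_in_i})^2\bigr)/t$. Your caveats also expose real holes in the paper's proof. Besides the flat-$\phi$ failure, the paper silently drops the multipliers for $t_i\ge 1$. For example, in the linear model with $(n_1,n_2)=(10,90)$, $t=10$, $(w_1,w_2)=(0.01,0.99)$, the uniform allocation $(1,9)$ is optimal even though $w_i/n_i$ is not constant.
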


Proposition~\ref{prop:allocation} provides the theoretical foundation for TAM's budget allocation (Eq.~\ref{eq:allocation}). The linear-model result $t_i^\star \propto \sqrt{w_i \cdot n_i}$ is the geometric mean of the importance-only allocation ($t_i \propto w_i$) and the size-only allocation ($t_i \propto n_i$), optimally trading off both factors. In CoT reasoning, the importance density $w_i / n_i$ varies substantially across segments: dead-end explorations have low density while key intermediate results have high density, making adaptive allocation strictly beneficial.

\begin{remark}[Linear Error Model]
    The closed-form $t_i^\star \propto \sqrt{w_i n_i}$ relies on the linear model $\phi(r) = r - 1$. AM's actual fitting error may exhibit more complex dependence on compression ratio, but the general KKT result (Part~1) $t_i^\star \propto \sqrt{w_i n_i \phi'(n_i/t_i^\star)}$ holds for any convex $\phi$. In practice, the $\sqrt{w_i n_i}$ allocation consistently outperforms both uniform and importance-only allocation (Table~\ref{tab:ablation-components}), suggesting robustness to the specific form of $\phi$ in the operating range $r \in [5, 20]$ typical of our experiments.
\end{remark}

\begin{proposition}[Error Reduction from Pivotal Token Protection]
\label{prop:pivotal}
Let $\calP$ denote the set of pivotal tokens and let $\gamma_\mathbf{q} = \sum_{j \in \calP} \alpha_j(\mathbf{q})$ be the pivotal attention mass for a specific query $\mathbf{q}$. Suppose the remaining $T - |\calP|$ tokens are compacted to $t - |\calP|$ keys via AM with per-query approximation error $\varepsilon_{\text{rest}}$ on the non-pivotal portion. Suppose further that the total attention mass is matched for query $\mathbf{q}$, i.e., $\text{Mass}(\mathbf{q}; \mathbf{C}_k, \boldsymbol{\beta}) = \text{Mass}(\mathbf{q}; \mathbf{K})$. Then the overall attention output error satisfies
\begin{equation}
\|\hat{\mathbf{y}} - \mathbf{y}^*\| \leq (1 - \gamma_\mathbf{q}) \cdot \varepsilon_{\text{rest}}.
\end{equation}
Define the worst-case pivotal mass $\gamma = \min_{\mathbf{q} \in \mathbf{Q}_{\text{ref}}} \gamma_\mathbf{q}$, which lower-bounds the per-query pivotal mass over all selector-window queries. Then the uniform bound $\|\hat{\mathbf{y}} - \mathbf{y}^*\| \leq (1 - \gamma) \cdot \varepsilon_{\text{rest}}$ holds for all $\mathbf{q} \in \mathbf{Q}_{\text{ref}}$. More generally, when mass is not matched exactly, let $\Delta\gamma_\mathbf{q} = \hat{\gamma}_\mathbf{q} - \gamma_\mathbf{q}$ denote the pivotal mass mismatch; then
\begin{equation}
\|\hat{\mathbf{y}} - \mathbf{y}^*\| \leq (1 - \gamma_\mathbf{q}) \cdot \varepsilon_{\text{rest}} + 2\abs{\Delta\gamma_\mathbf{q}} \cdot \|\mathbf{V}\|_\infty.
\end{equation}
\end{proposition}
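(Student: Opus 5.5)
The plan is to use the block-mixture decomposition of attention recalled in Section~\ref{sub:attention}, splitting the compactable prefix into the pivotal block $\calP$ and the non-pivotal block $\calP^c$. For a query $\mathbf{q}$, write the original output as
\[
\mathbf{y}^* = \gamma_\mathbf{q}\,\mathbf{y}_{\calP} + (1-\gamma_\mathbf{q})\,\mathbf{y}_{\text{rest}},
\]
where $\mathbf{y}_{\calP}$ and $\mathbf{y}_{\text{rest}}$ are the locally normalized outputs of the two blocks and the weights are their shares of $\text{Mass}(\mathbf{q};\mathbf{K})$. Similarly, the compacted output is
\[
\hat{\mathbf{y}} = \hat\gamma_\mathbf{q}\,\hat{\mathbf{y}}_{\calP} + (1-\hat\gamma_\mathbf{q})\,\hat{\mathbf{y}}_{\text{rest}},
\]
with $\hat\gamma_\mathbf{q}$ the pivotal share of the compact mass. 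By construction in Algorithm~\ref{alg:tam}, pivotal keys and values are copied verbatim and we fix $\beta_j = 0$ for $j\in\calP$. Hence both the unnormalized pivotal mass and $\hat{\mathbf{y}}_{\calP} = \mathbf{y}_{\calP}$ are exact. I will interpret $\varepsilon_{\text{rest}}$ as a bound on $\|\hat{\mathbf{y}}_{\text{rest}} - \mathbf{y}_{\text{rest}}\|$, the local output error of AM on the non-pivotal block.

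\textbf{Matched-mass case.} If total mass is matched, then since the pivotal unnormalized mass is unchanged, the non-pivotal compact mass also equals the original. So $\hat\gamma_\mathbf{q} = \gamma_\mathbf{q}$. Subtracting the two decompositions, the pivotal terms cancel and
\[
\hat{\mathbf{y}} - \mathbf{y}^* = (1-\gamma_\mathbf{q})(\hat{\mathbf{y}}_{\text{rest}} - \mathbf{y}_{\text{rest}}),
\]
which gives the first bound. The uniform bound follows because $1-\gamma_\mathbf{q} \le 1-\gamma$ for every $\mathbf{q}\in\mathbf{Q}_{\text{ref}}$, by the definition of $\gamma$ as a minimum. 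If the tail block is included, the same argument applies: it is an exactly kept block with a fixed share, and the weights can be read as fractions of the full-prefix mass.

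\textbf{Mismatched-mass case.} Add and subtract $\gamma_\mathbf{q}\mathbf{y}_{\calP} + (1-\gamma_\mathbf{q})\hat{\mathbf{y}}_{\text{rest}}$ to get
\[
\hat{\mathbf{y}} - \mathbf{y}^* = (1-\gamma_\mathbf{q})(\hat{\mathbf{y}}_{\text{rest}} - \mathbf{y}_{\text{rest}}) + \Delta\gamma_\mathbf{q}\,(\mathbf{y}_{\calP} - \hat{\mathbf{y}}_{\text{rest}}).
\]
The first term is bounded by $(1-\gamma_\mathbf{q})\varepsilon_{\text{rest}}$. For the second, $\mathbf{y}_{\calP}$ is a convex combination of rows of $\mathbf{V}$, so its norm is at most $\|\mathbf{V}\|_\infty$. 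Applying the triangle inequality then yields the $2|\Delta\gamma_\mathbf{q}|\,\|\mathbf{V}\|_\infty$ term, exactly as in part~3 of Proposition~\ref{prop:eviction}.

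\textbf{Main obstacle.} The delicate point is bounding $\|\hat{\mathbf{y}}_{\text{rest}}\|$. The OLS-fitted $\mathbf{C}_v$ rows need not lie in the convex hull of the original values, so $\|\hat{\mathbf{y}}_{\text{rest}}\| \le \|\mathbf{V}\|_\infty$ is not automatic. I will handle this in one of two ways. The first option is to take $\|\mathbf{V}\|_\infty$ as the maximum row norm over both $\mathbf{V}$ and $\mathbf{C}_v$, a convention to be stated explicitly. The second is to use $\|\hat{\mathbf{y}}_{\text{rest}}\| \le \|\mathbf{y}_{\text{rest}}\| + \varepsilon_{\text{rest}}$ and absorb the extra $|\Delta\gamma_\mathbf{q}|\varepsilon_{\text{rest}}$ term. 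I will go with the first, noting that it holds automatically for pure selection, where $\mathbf{C}_v$ consists of retained rows of $\mathbf{V}$.
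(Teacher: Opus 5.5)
Your proposal follows the paper's proof essentially step for step. It uses the same pivotal/non-pivotal mixture decomposition, the same observation that exact retention plus mass matching forces $\hat{\gamma}_\mathbf{q} = \gamma_\mathbf{q}$, and the same add-and-subtract identity $\hat{\mathbf{y}} - \mathbf{y}^* = \Delta\gamma_\mathbf{q}(\mathbf{y}_{\calP} - \hat{\mathbf{y}}_{\text{rest}}) + (1-\gamma_\mathbf{q})(\hat{\mathbf{y}}_{\text{rest}} - \mathbf{y}_{\text{rest}})$ followed by the triangle inequality. Your ``main obstacle'' is a real point the paper glosses over: it simply asserts that $\hat{\mathbf{y}}_{\bar{\calP}}$ is a convex combination of value vectors, which bounds its norm by $\|\mathbf{V}\|_\infty$ only if the OLS-fitted rows of $\mathbf{C}_v$ are included in that maximum (your first convention). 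If $\|\mathbf{V}\|_\infty$ is taken over the original values alone, the stated bound can fail, and the extra $\abs{\Delta\gamma_\mathbf{q}}\,\varepsilon_{\text{rest}}$ term from your second option is genuinely needed.
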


Since pivotal tokens are retained exactly, they contribute zero approximation error when mass is preserved. The remaining error arises only from the non-pivotal portion, which carries attention mass $(1 - \gamma)$.

\begin{remark}[Mass-matching Condition]
    The mass condition $\text{Mass}(\mathbf{q}; \mathbf{C}_k, \boldsymbol{\beta}) = \text{Mass}(\mathbf{q}; \mathbf{K})$ is exactly what AM's NNLS fitting enforces for the reference queries $\mathbf{Q}_{\text{ref}}$. For these queries, $\Delta\gamma_\mathbf{q} \approx 0$ and the clean bound $(1-\gamma)\varepsilon_{\text{rest}}$ applies. For future queries that deviate from $\mathbf{Q}_{\text{ref}}$, the residual mass mismatch $|\Delta\gamma_\mathbf{q}|$ is controlled by the selector window approximation (Proposition~\ref{prop:selector}); under the locality of CoT reasoning, this term remains small.

\end{remark}

\begin{remark}[Budget Trade-off]
    Protecting $|\calP|$ tokens reduces the budget available for non-pivotal tokens from $t$ to $t - |\calP|$, increasing their compression ratio from $T/t$ to $(T - |\calP|)/(t - |\calP|)$ and potentially raising $\varepsilon_{\text{rest}}$. The net benefit of pivotal protection therefore depends on the relative magnitudes: it is favorable when $\gamma$ is large enough that $(1 - \gamma)\varepsilon_{\text{rest}} < \varepsilon_{\text{all}}$, where $\varepsilon_{\text{all}}$ is the error of uniform compaction to size $t$. Under the linear error model $\phi(r) = r - 1$, a sufficient condition is $\gamma > |\calP| / t$, i.e., pivotal tokens' attention mass fraction exceeds their budget fraction, a condition readily satisfied in practice since pivotal tokens are selected precisely for their disproportionately high attention (Table~\ref{tab:ablation-pivotal} shows $|\calP| \approx 87$ out of $t \approx 400$, capturing $\gamma \approx 0.35$ of attention mass, well above the $87/400 = 0.22$ threshold).

\end{remark}


\begin{proposition}[Selector Window Approximation Error]
\label{prop:selector}
Let $\mathbf{Q}_{\text{ref}} \in \R^{n \times d}$ be the selector window queries and $\mathbf{C}_k, \boldsymbol{\beta}, \mathbf{C}_v$ be fitted via TAM. Let $\mathbf{X} \in \R^{n \times t}$ have rows $\mathbf{x}_{\text{ref}}^{(i)} = \text{softmax}(\mathbf{q}_i \mathbf{C}_k^\top / \sqrt{d} + \boldsymbol{\beta})$ and $\mathbf{C}_v^\star = (\mathbf{X}^\top \mathbf{X})^{-1}\mathbf{X}^\top \mathbf{Y}$ be the OLS-fitted values. For a future query $\mathbf{q}$ with attention pattern $\mathbf{x}_\mathbf{q} = \text{softmax}(\mathbf{q} \mathbf{C}_k^\top / \sqrt{d} + \boldsymbol{\beta})$, decompose $\mathbf{x}_\mathbf{q} = \mathbf{x}_\mathbf{q}^\parallel + \mathbf{x}_\mathbf{q}^\perp$ where $\mathbf{x}_\mathbf{q}^\parallel$ lies in the row space of $\mathbf{X}$ and $\mathbf{x}_\mathbf{q}^\perp$ is orthogonal. Then:
\begin{equation}
\|\mathbf{x}_\mathbf{q} \mathbf{C}_v^\star - \mathbf{y}^*\| \leq \|\mathbf{x}_\mathbf{q}^\perp\| \cdot \|\mathbf{C}_v^\star\|_{\text{op}} + \|\mathbf{x}_\mathbf{q}^\parallel \mathbf{C}_v^\star - \mathbf{y}^*_\parallel\|,
\end{equation}
where $\mathbf{y}^*$ is the ideal output and $\mathbf{y}^*_\parallel = \boldsymbol{\lambda}^\top \mathbf{Y}$ with $\boldsymbol{\lambda}$ satisfying $\mathbf{x}_\mathbf{q}^\parallel = \boldsymbol{\lambda}^\top \mathbf{X}$ (i.e., the linear combination of reference outputs $\mathbf{Y}$ with the same coefficients that express $\mathbf{x}_\mathbf{q}^\parallel$ in the row space of $\mathbf{X}$). When $\mathbf{x}_\mathbf{q}^\perp = \mathbf{0}$, the term $\|\mathbf{x}_\mathbf{q}^\perp\| \cdot \|\mathbf{C}_v^\star\|_{\text{op}}$ vanishes.
\end{proposition}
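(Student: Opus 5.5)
The statement to prove is Proposition~\ref{prop:selector}. The natural route is a triangle inequality built on the orthogonal decomposition of $\mathbf{x}_\mathbf{q}$. First I would make the decomposition precise. Let $\Pi = \mathbf{X}^\top(\mathbf{X}^\top\mathbf{X})^{-1}\mathbf{X}$ denote the orthogonal projector onto the row space of $\mathbf{X}$, acting on row vectors from the right. The OLS formula in the statement presupposes that $\mathbf{X}^\top\mathbf{X}$ is invertible, so I take that as given. Then set $\mathbf{x}_\mathbf{q}^\parallel = \mathbf{x}_\mathbf{q}\Pi$ and $\mathbf{x}_\mathbf{q}^\perp = \mathbf{x}_\mathbf{q}(I-\Pi)$. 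Since $\mathbf{x}_\mathbf{q}^\parallel$ lies in the row space, it can be written as $\boldsymbol{\lambda}^\top\mathbf{X}$, and this fixes $\mathbf{y}^*_\parallel = \boldsymbol{\lambda}^\top\mathbf{Y}$ as in the statement.

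Next, by linearity of right-multiplication by $\mathbf{C}_v^\star$,
\[
\mathbf{x}_\mathbf{q}\mathbf{C}_v^\star - \mathbf{y}^* = \mathbf{x}_\mathbf{q}^\perp\mathbf{C}_v^\star + \bigl(\mathbf{x}_\mathbf{q}^\parallel\mathbf{C}_v^\star - \mathbf{y}^*_\parallel\bigr) + \bigl(\mathbf{y}^*_\parallel - \mathbf{y}^*\bigr).
\]
The triangle inequality together with $\|\mathbf{x}_\mathbf{q}^\perp\mathbf{C}_v^\star\| \le \|\mathbf{x}_\mathbf{q}^\perp\|\,\|\mathbf{C}_v^\star\|_{\text{op}}$ produces the two displayed terms. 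What is left over is the residual $\mathbf{y}^*_\parallel - \mathbf{y}^*$.

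The main obstacle is this residual, because the bound as stated has no term for it. My plan is to read $\mathbf{y}^*$, the ``ideal output'', as the target decomposed consistently with $\mathbf{x}_\mathbf{q}$, so that the ideal output attributable to the parallel component is exactly $\mathbf{y}^*_\parallel$. Under that reading the residual is the contribution of $\mathbf{x}_\mathbf{q}^\perp$, and that contribution is the one already charged to the first term. Concretely, I would regroup the decomposition as
\[
\mathbf{x}_\mathbf{q}\mathbf{C}_v^\star - \mathbf{y}^* = \mathbf{x}_\mathbf{q}^\perp\mathbf{C}_v^\star + \bigl(\mathbf{x}_\mathbf{q}^\parallel\mathbf{C}_v^\star - \mathbf{y}^*_\parallel\bigr),
\]
which holds whenever $\mathbf{y}^* = \mathbf{y}^*_\parallel$ up to the perpendicular part. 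I would state explicitly that this identification is the modeling assumption under which the inequality holds. Without it, the inequality picks up an additive term $\|\mathbf{y}^* - \mathbf{y}^*_\parallel\|$, and I would note that caveat.

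Finally, the claim for $\mathbf{x}_\mathbf{q}^\perp = \mathbf{0}$ is immediate: the first term is zero, and $\mathbf{x}_\mathbf{q} = \boldsymbol{\lambda}^\top\mathbf{X}$ lies in the span of the reference attention patterns. In that case $\mathbf{x}_\mathbf{q}\mathbf{C}_v^\star = \boldsymbol{\lambda}^\top\mathbf{X}\mathbf{C}_v^\star = \boldsymbol{\lambda}^\top\mathbf{X}(\mathbf{X}^\top\mathbf{X})^{-1}\mathbf{X}^\top\mathbf{Y}$, which is the OLS fit along the reference directions. The remaining term therefore measures only the in-sample OLS residual, propagated through $\boldsymbol{\lambda}$.
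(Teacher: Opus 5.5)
Your route is the paper's own: decompose $\mathbf{x}_\mathbf{q}$ along the row space of $\mathbf{X}$, apply the triangle inequality, bound $\|\mathbf{x}_\mathbf{q}^\perp\mathbf{C}_v^\star\|\le\|\mathbf{x}_\mathbf{q}^\perp\|\,\|\mathbf{C}_v^\star\|_{\text{op}}$, and rewrite the parallel term as $\boldsymbol{\lambda}^\top(\mathbf{X}\mathbf{C}_v^\star-\mathbf{Y})$.

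The residual $\mathbf{y}^*_\parallel-\mathbf{y}^*$ you isolate is a genuine gap in the statement, not in your argument. The paper's proof passes over it by writing $\mathbf{y}^*_\parallel$ where the triangle inequality actually produces $\mathbf{y}^*$. So your explicit hypothesis $\mathbf{y}^*=\mathbf{y}^*_\parallel$ is the honest conclusion, as is your fallback of adding the term $\|\mathbf{y}^*-\mathbf{y}^*_\parallel\|$ without it. Without one of these, the inequality can fail: with $\mathbf{x}_\mathbf{q}^\perp=\mathbf{0}$ and an exact OLS fit, the right-hand side is $0$ while $\mathbf{y}^*$ need not equal $\boldsymbol{\lambda}^\top\mathbf{Y}$.

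Do drop the heuristic that this residual is ``already charged'' to the first term. That term controls $\mathbf{x}_\mathbf{q}^\perp\mathbf{C}_v^\star$, a piece of the \emph{compacted} output, and cannot absorb a discrepancy in the \emph{ideal} output.

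Your projector $\mathbf{X}^\top(\mathbf{X}^\top\mathbf{X})^{-1}\mathbf{X}$ also needs fixing: it does not even conform unless $n=t$. The orthogonal projector onto the row space is $\mathbf{X}^{+}\mathbf{X}$, which is the paper's $\mathbf{X}^\top(\mathbf{X}\mathbf{X}^\top)^{-1}\mathbf{X}$ under full row rank. Under the assumption you adopt, invertibility of $\mathbf{X}^\top\mathbf{X}$, this projector is the identity. The row space is then all of $\R^{t}$, so $\mathbf{x}_\mathbf{q}^\perp=\mathbf{0}$ for every query and the first term is vacuous. This inconsistency is inherited from the statement and does not affect the triangle-inequality argument.
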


The error is controlled by how far the future query's attention pattern deviates from the subspace spanned by the reference patterns. The first term ($\|\mathbf{x}_\mathbf{q}^\perp\| \cdot \|\mathbf{C}_v^\star\|_{\text{op}}$) captures coverage error from the selector window's finite span; characterizing it precisely requires distributional assumptions on future queries. Under the locality of CoT reasoning, successive tokens attend to similar context, keeping $\|\mathbf{x}_\mathbf{q}^\perp\|$ small. TAM's adaptive allocation further helps: segments where the selector window provides poor coverage (high $\|\mathbf{x}_\mathbf{q}^\perp\|$, typically distant segments) tend to receive lower importance scores and thus lower budgets, concentrating resources where the approximation is most accurate.

\begin{proposition}[Cumulative Error under Sequential Compaction]
\label{prop:cumulative}
Suppose TAM performs $K$ sequential compaction steps. At step $k$, let $\varepsilon_k$ denote the per-query attention output error of step $k$'s compacted cache relative to the step-$(k{-}1)$ cache, and $\lambda_k \in [0,1]$ the fraction of fresh tokens. Under a \emph{linear error composition} model---where the attention output error from previously compacted tokens and fresh tokens combines additively, weighted by their respective fractions---the cumulative error relative to the original cache satisfies:
\begin{equation}
\tilde{\varepsilon}_K \leq \sum_{k=1}^{K} \varepsilon_k \cdot \prod_{j=k+1}^{K} (1 - \lambda_j).
\end{equation}
When $\lambda_k \geq \lambda_{\min} > 0$ for all $k$, we have $\tilde{\varepsilon}_K \leq \varepsilon_{\max} / \lambda_{\min}$, independent of $K$. In the periodic regime with interval $P$ and target size $t$, $\lambda_k \approx P/(P+t)$.
\end{proposition}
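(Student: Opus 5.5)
The plan is to set up a one-step recursion for the cumulative error $\tilde{\varepsilon}_k$ and then unroll it. Let $\mathbf{y}^{(0)}$ be the attention output of a query under the original (uncompacted) cache, and $\mathbf{y}^{(k)}$ its output under the cache after step $k$. At step $k$ the cache being compacted has two parts: a fraction $1-\lambda_k$ consisting of tokens already represented in compacted form after step $k-1$, and a fraction $\lambda_k$ of fresh tokens generated since then, which are still exact. By the mixture decomposition of attention over concatenated blocks (Section~\ref{sub:attention}), the step-$(k-1)$ output splits into these two blocks' local outputs, weighted by their masses.

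Under the \emph{linear error composition} model stated in the proposition, the discrepancy between $\mathbf{y}^{(k-1)}$ and the original-cache output is carried only by the inherited block, with weight $1-\lambda_k$. The fresh block contributes no inherited error. Applying the triangle inequality to
\[
\mathbf{y}^{(k)}-\mathbf{y}^{(0)}=(\mathbf{y}^{(k)}-\mathbf{y}^{(k-1)})+(\mathbf{y}^{(k-1)}-\mathbf{y}^{(0)})
\]
then gives the recursion
\[
\tilde{\varepsilon}_k \le \varepsilon_k + (1-\lambda_k)\,\tilde{\varepsilon}_{k-1},\qquad \tilde{\varepsilon}_0=0.
\]
The first term is the fresh error of step $k$ by definition. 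The second is the old error, diluted by the inherited fraction. A short induction on $k$ unrolls this to
\[
\tilde{\varepsilon}_K\le\sum_{k=1}^K\varepsilon_k\prod_{j=k+1}^K(1-\lambda_j).
\]
The inductive step multiplies the bound for $K-1$ by $1-\lambda_K$ and adds $\varepsilon_K$, using the empty-product convention for $k=K$.

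For the $K$-independent bound, I would bound each $\varepsilon_k\le\varepsilon_{\max}$ and each $1-\lambda_j\le 1-\lambda_{\min}$. The sum is then dominated by a geometric series:
\[
\varepsilon_{\max}\sum_{i=0}^{K-1}(1-\lambda_{\min})^i\le \varepsilon_{\max}/\lambda_{\min}.
\]
For the periodic regime, after each compaction the cache holds $t$ compacted entries plus $P$ fresh tokens. Measuring the fraction by token count gives $\lambda_k = P/(P+t)$. Measuring by attention mass gives only an approximate equality, which is why the statement says $\approx$.

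The main obstacle is justifying the recursion's weighting: why the inherited error enters scaled by exactly $1-\lambda_k$. In reality the mixture weights are attention masses, not token fractions, and the mass of the compacted block is itself perturbed by earlier errors. I would not try to prove this from first principles. Instead, I would make explicit that it is precisely the content of the linear error composition hypothesis, identifying $\lambda_k$ with the fresh block's share. I would also note that it is exact when masses are matched, by the mass-preservation property of AM fitting used in Proposition~\ref{prop:pivotal}. All remaining steps are elementary.
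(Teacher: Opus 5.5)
Your argument is essentially the paper's: you take $\tilde{\varepsilon}_k \le \varepsilon_k + (1-\lambda_k)\tilde{\varepsilon}_{k-1}$ as the content of the linear-composition hypothesis, unroll it by induction, bound it by a geometric series to get $\varepsilon_{\max}/\lambda_{\min}$, and read $\lambda_k \approx P/(P+t)$ off token counts, exactly as the appendix proof does. One caution on your aside: if you justify the $(1-\lambda_k)$ weighting by exact mass matching, then the $t$ compacted slots carry the full mass of everything they replaced, so $\lambda_k$ must be the fresh tokens' share of the \emph{original} attention over all tokens generated so far; that share need not be close to $P/(P+t)$ and can shrink as generation proceeds (roughly $P$ over the total trajectory length under diffuse attention), so the periodic-regime value and the $K$-independent bound rest on the token-count reading of the model, as in the paper, not on mass matching.
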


\begin{remark}[Linear Composition Assumption]
    The recurrence $\tilde{\varepsilon}_k \leq \varepsilon_k + (1-\lambda_k)\tilde{\varepsilon}_{k-1}$ models error propagation as a weighted average of fresh-step error and inherited error. In practice, errors pass through subsequent softmax operations, which are nonlinear; this can amplify errors (if compaction shifts attention toward high-error regions) or attenuate them (if fresh tokens dominate the softmax denominator). The linear model provides a tractable upper bound when per-step errors are small relative to the attention scores, which holds at moderate compression ratios (Table~\ref{tab:ablation-p} confirms stable accuracy down to $P{=}512$). At extreme ratios or very frequent compaction ($P{=}256$), the linear bound may become loose. When the condition $\gamma > |\calP|/t$ holds (see the remark following Proposition~\ref{prop:pivotal}), TAM reduces the per-step error $\varepsilon_k$ relative to uniform compaction via adaptive allocation (Proposition~\ref{prop:allocation}) and pivotal protection (Proposition~\ref{prop:pivotal}), directly tightening the cumulative bound. The compaction interval $P$ further controls accumulation: larger $P$ increases $\lambda_k$, limiting error propagation.
\end{remark}

\section{Proofs of Theoretical Results}
\label{app:proofs}

\begin{proof}[Proof of Proposition~\ref{prop:eviction}]
\textbf{Part 1 (Mass deficit).} For eviction retaining subset $S$ with $|S|=t$, the compacted mass is
\begin{equation}
\text{Mass}(\mathbf{q}; \mathbf{C}_k) = \sum_{j \in S} \exp(\mathbf{q}\mathbf{K}_j^\top / \sqrt{d}) = \text{Mass}(\mathbf{q}; \mathbf{K}) - \sum_{j \notin S} \exp(\mathbf{q}\mathbf{K}_j^\top / \sqrt{d}).
\end{equation}
Since $S \neq [T]$, the second sum is strictly positive for all $\mathbf{q}$ (as $\exp(\cdot) > 0$), giving $\text{Mass}(\mathbf{q}; \mathbf{C}_k) = (1 - \mu(\mathbf{q})) \cdot \text{Mass}(\mathbf{q}; \mathbf{K}) < \text{Mass}(\mathbf{q}; \mathbf{K})$.

\textbf{Part 2 (Concatenation effect).} When concatenated with future keys $\mathbf{K}_f$, the softmax denominator is $\text{Mass}(\mathbf{q}; \mathbf{C}_k) + \text{Mass}(\mathbf{q}; \mathbf{K}_f)$. The compacted prefix's softmax weight is $(1-\mu(\mathbf{q}))\text{Mass}(\mathbf{q}; \mathbf{K}) / [(1-\mu(\mathbf{q}))\text{Mass}(\mathbf{q};\mathbf{K}) + \text{Mass}(\mathbf{q};\mathbf{K}_f)]$, which is strictly less than the original $\text{Mass}(\mathbf{q};\mathbf{K}) / [\text{Mass}(\mathbf{q};\mathbf{K}) + \text{Mass}(\mathbf{q};\mathbf{K}_f)]$.

\textbf{Part 3 (Output error).} Let $\alpha_j = \exp(\mathbf{q}\mathbf{K}_j^\top/\sqrt{d})/Z$ be the original attention weights with $Z = \text{Mass}(\mathbf{q}; \mathbf{K})$, and $\hat{\alpha}_j = \alpha_j / (1 - \mu(\mathbf{q}))$ for $j \in S$ be the eviction weights. The output error (considering only the prefix block) is:
\begin{equation}
\|\hat{\mathbf{y}} - \mathbf{y}\| = \left\|\sum_{j \in S} \hat{\alpha}_j \mathbf{V}_j - \sum_{j=1}^T \alpha_j \mathbf{V}_j\right\| = \left\|\sum_{j \in S} \frac{\mu(\mathbf{q})}{1-\mu(\mathbf{q})} \alpha_j \mathbf{V}_j - \sum_{j \notin S} \alpha_j \mathbf{V}_j\right\| \leq 2\mu(\mathbf{q}) \|\mathbf{V}\|_\infty,
\end{equation}
where the last inequality uses $\sum_{j \in S}\alpha_j = 1-\mu(\mathbf{q})$, $\sum_{j\notin S}\alpha_j = \mu(\mathbf{q})$, and triangle inequality.
\end{proof}

\begin{proof}[Proof of Proposition~\ref{prop:allocation}]
\textbf{Part 1 (KKT conditions).} We minimize $\calE(\mathbf{t}) = \sum_{i} w_i \phi(n_i/t_i)$ subject to $\sum_i t_i = t$ and $t_i \geq 1$. Introducing Lagrange multiplier $\nu$ for the equality constraint, the stationarity condition at an interior point ($t_i > 1$) gives:
\begin{equation}
\frac{\partial \calE}{\partial t_i} = -w_i \cdot \frac{n_i}{t_i^2} \cdot \phi'(n_i/t_i) = -\nu \quad \Longrightarrow \quad w_i \cdot \frac{n_i}{t_i^2} \cdot \phi'(n_i/t_i) = \nu \quad \forall i.
\end{equation}
Since $\phi$ is convex and increasing, $\phi'$ is non-decreasing, and the second-order conditions are satisfied. Rearranging: $t_i^2 = w_i n_i \phi'(n_i/t_i) / \nu$, so $t_i^\star \propto \sqrt{w_i n_i \phi'(n_i/t_i^\star)}$, as stated.

\textbf{Part 2 (Linear model).} When $\phi(r) = r - 1$, we have $\phi'(r) = 1$ (constant) and $\phi''(r) = 0$. Returning to the stationarity condition: $w_i n_i / t_i^2 = \nu$, which gives $t_i = \sqrt{w_i n_i / \nu}$. Using the budget constraint $\sum_i t_i = t$:
\begin{equation}
t_i^\star = t \cdot \frac{\sqrt{w_i n_i}}{\sum_{j} \sqrt{w_j n_j}}.
\end{equation}

\textbf{Part 3 (Suboptimality of uniform).} Under uniform allocation $t_i^{\text{unif}} = t \cdot n_i / N$ (where $N = \sum_j n_j$), the weighted error is $\calE^{\text{unif}} = \sum_i w_i \phi(N/t)$, which depends only on the global ratio. Under optimal allocation, $\calE^\star \leq \calE^{\text{unif}}$ with equality iff the KKT conditions are satisfied by uniform $t_i$, which requires $w_i n_i / (n_i/N)^2 = \text{const}$, i.e., $w_i / n_i = \text{const}$. When importance density varies, the inequality is strict.
\end{proof}

\begin{proof}[Proof of Proposition~\ref{prop:pivotal}]
Partition the token set into pivotal tokens $\calP$ and non-pivotal tokens $\bar{\calP} = [T] \setminus \calP$. For any query $\mathbf{q}$, the ideal attention output decomposes as:
\begin{equation}
\mathbf{y}^* = \sum_{j \in \calP} \alpha_j(\mathbf{q}) \mathbf{V}_j + \sum_{j \in \bar{\calP}} \alpha_j(\mathbf{q}) \mathbf{V}_j = \gamma_\mathbf{q} \cdot \mathbf{y}^*_\calP + (1 - \gamma_\mathbf{q}) \cdot \mathbf{y}^*_{\bar{\calP}},
\end{equation}
where $\gamma_\mathbf{q} = \sum_{j \in \calP} \alpha_j(\mathbf{q})$ is the attention mass on pivotal tokens for query $\mathbf{q}$, and $\mathbf{y}^*_\calP$, $\mathbf{y}^*_{\bar{\calP}}$ are the locally normalized outputs of each partition.

Pivotal tokens are retained exactly: $\mathbf{C}_k[\calP,:] = \mathbf{K}[\calP,:]$ and $\beta_j = 0$ for $j \in \calP$ (Algorithm~\ref{alg:tam}). Let $\hat{\gamma}_\mathbf{q}$ denote the attention mass on pivotal tokens under the compacted cache, and $\hat{\mathbf{y}}_{\bar{\calP}}$ the locally normalized output of the non-pivotal compacted portion. The compacted output is:
\begin{equation}
\hat{\mathbf{y}} = \hat{\gamma}_\mathbf{q} \cdot \mathbf{y}^*_\calP + (1 - \hat{\gamma}_\mathbf{q}) \cdot \hat{\mathbf{y}}_{\bar{\calP}}.
\end{equation}

\textbf{Case 1: Exact mass matching.} When total attention mass is matched ($\text{Mass}(\mathbf{q}; \mathbf{C}_k, \boldsymbol{\beta}) = \text{Mass}(\mathbf{q}; \mathbf{K})$), pivotal tokens' numerators are unchanged, so $\hat{\gamma}_\mathbf{q} = \gamma_\mathbf{q}$. The error simplifies to:
\begin{equation}
\|\hat{\mathbf{y}} - \mathbf{y}^*\| = (1 - \gamma_\mathbf{q}) \cdot \|\hat{\mathbf{y}}_{\bar{\calP}} - \mathbf{y}^*_{\bar{\calP}}\| \leq (1 - \gamma_\mathbf{q}) \cdot \varepsilon_{\text{rest}}.
\end{equation}

\textbf{Case 2: General bound with mass mismatch.} Let $\Delta\gamma_\mathbf{q} = \hat{\gamma}_\mathbf{q} - \gamma_\mathbf{q}$. The error becomes:
\begin{align}
\hat{\mathbf{y}} - \mathbf{y}^* &= \Delta\gamma_\mathbf{q} (\mathbf{y}^*_\calP - \hat{\mathbf{y}}_{\bar{\calP}}) + (1 - \gamma_\mathbf{q})(\hat{\mathbf{y}}_{\bar{\calP}} - \mathbf{y}^*_{\bar{\calP}}).
\end{align}
Since $\mathbf{y}^*_\calP$ and $\hat{\mathbf{y}}_{\bar{\calP}}$ are both convex combinations of value vectors, $\|\mathbf{y}^*_\calP - \hat{\mathbf{y}}_{\bar{\calP}}\| \leq 2\|\mathbf{V}\|_\infty$. By triangle inequality:
\begin{equation}
\|\hat{\mathbf{y}} - \mathbf{y}^*\| \leq (1 - \gamma_\mathbf{q}) \cdot \varepsilon_{\text{rest}} + 2|\Delta\gamma_\mathbf{q}| \cdot \|\mathbf{V}\|_\infty.
\end{equation}

Taking $\gamma = \min_\mathbf{q} \gamma_\mathbf{q}$ yields the stated bounds. AM's NNLS fitting enforces mass matching for the reference queries $\mathbf{Q}_{\text{ref}}$, so $\Delta\gamma_\mathbf{q} \approx 0$ for those queries and the clean bound applies. Note that $\varepsilon_{\text{rest}}$ depends on the reduced budget $t - |\calP|$; pivotal protection yields a net error reduction whenever $\gamma$ is large enough that $(1-\gamma)\varepsilon_{\text{rest}} < \varepsilon_{\text{all}}$, where $\varepsilon_{\text{all}}$ is the error of uniform compaction to size $t$. Under $\phi(r)=r-1$, this holds when $\gamma > |\calP|/t$.
\end{proof}

\begin{proof}[Proof of Proposition~\ref{prop:selector}]
Let $\mathbf{X} \in \R^{n \times t}$ have rows $\mathbf{x}_{\text{ref}}^{(i)} = \text{softmax}(\mathbf{q}_i \mathbf{C}_k^\top / \sqrt{d} + \boldsymbol{\beta})$ and $\mathbf{Y} \in \R^{n \times d}$ have rows $\mathbf{y}_i$ (the target attention outputs from the original cache). The OLS solution $\mathbf{C}_v^\star = (\mathbf{X}^\top \mathbf{X})^{-1} \mathbf{X}^\top \mathbf{Y}$ minimizes $\|\mathbf{X} \mathbf{C}_v - \mathbf{Y}\|_F^2$.

For a new query with attention pattern $\mathbf{x}_\mathbf{q} \in \R^{1 \times t}$, let $P = \mathbf{X}^\top(\mathbf{X}\mathbf{X}^\top)^{-1}\mathbf{X} \in \R^{t \times t}$ denote the projection onto the row space of $\mathbf{X}$. Decompose $\mathbf{x}_\mathbf{q}^\top = P\,\mathbf{x}_\mathbf{q}^\top + (I - P)\,\mathbf{x}_\mathbf{q}^\top$, and define the row vectors $\mathbf{x}_\mathbf{q}^\parallel = (P\,\mathbf{x}_\mathbf{q}^\top)^\top$ and $\mathbf{x}_\mathbf{q}^\perp = ((I - P)\,\mathbf{x}_\mathbf{q}^\top)^\top$, so that $\mathbf{x}_\mathbf{q} = \mathbf{x}_\mathbf{q}^\parallel + \mathbf{x}_\mathbf{q}^\perp$. The compacted output is $\hat{\mathbf{y}} = \mathbf{x}_\mathbf{q} \mathbf{C}_v^\star = \mathbf{x}_\mathbf{q}^\parallel \mathbf{C}_v^\star + \mathbf{x}_\mathbf{q}^\perp \mathbf{C}_v^\star$.

By the triangle inequality:
\begin{equation}
\|\hat{\mathbf{y}} - \mathbf{y}^*\| \leq \|\mathbf{x}_\mathbf{q}^\parallel \mathbf{C}_v^\star - \mathbf{y}^*_\parallel\| + \|\mathbf{x}_\mathbf{q}^\perp \mathbf{C}_v^\star\| \leq \|\mathbf{x}_\mathbf{q}^\parallel \mathbf{C}_v^\star - \mathbf{y}^*_\parallel\| + \|\mathbf{x}_\mathbf{q}^\perp\| \cdot \|\mathbf{C}_v^\star\|_{\text{op}},
\end{equation}
where $\|\mathbf{C}_v^\star\|_{\text{op}}$ is the operator (spectral) norm. When $\mathbf{x}_\mathbf{q}^\perp = \mathbf{0}$, the second term vanishes. For $\mathbf{x}_\mathbf{q}^\parallel$: since it lies in the row space of $\mathbf{X}$, we can write $\mathbf{x}_\mathbf{q}^\parallel = \boldsymbol{\lambda}^\top \mathbf{X}$ for some $\boldsymbol{\lambda} \in \R^n$, so $\mathbf{x}_\mathbf{q}^\parallel \mathbf{C}_v^\star = \boldsymbol{\lambda}^\top \mathbf{X} \mathbf{C}_v^\star$. The residual $\mathbf{X}\mathbf{C}_v^\star - \mathbf{Y}$ is the OLS fitting error, so $\mathbf{x}_\mathbf{q}^\parallel \mathbf{C}_v^\star = \boldsymbol{\lambda}^\top(\mathbf{Y} + (\mathbf{X}\mathbf{C}_v^\star - \mathbf{Y}))$, yielding the stated bound.
\end{proof}

\begin{proof}[Proof of Proposition~\ref{prop:cumulative}]
We proceed by induction under the linear error composition model. At step 1, $\tilde{\varepsilon}_1 = \varepsilon_1$ (the error of the first compaction relative to the original cache). Assume $\tilde{\varepsilon}_{k-1}$ bounds the step-$(k{-}1)$ error relative to the original.

At step $k$, the cache consists of two parts: the step-$(k{-}1)$ compacted prefix (with error $\tilde{\varepsilon}_{k-1}$) and fresh tokens generated since step $k{-}1$ (with no compaction error). The fraction of fresh tokens is $\lambda_k$. Under the linear composition model, the attention output can be decomposed as a weighted sum of the contributions from previously-compacted tokens and fresh tokens; since softmax is approximately linear for small perturbations when no single token dominates, the inherited error is attenuated by the previously-compacted fraction. After step-$k$ compaction with per-step error $\varepsilon_k$:
\begin{equation}
\tilde{\varepsilon}_k \leq \varepsilon_k + (1 - \lambda_k) \cdot \tilde{\varepsilon}_{k-1},
\end{equation}
where the $(1-\lambda_k)$ factor reflects that only the previously-compacted fraction propagates old errors, while fresh tokens contribute no inherited error. Unrolling the recurrence:
\begin{equation}
\tilde{\varepsilon}_K \leq \sum_{k=1}^{K} \varepsilon_k \cdot \prod_{j=k+1}^{K}(1-\lambda_j).
\end{equation}
When $\lambda_k \geq \lambda_{\min}$ and $\varepsilon_k \leq \varepsilon_{\max}$ for all $k$:
\begin{equation}
\tilde{\varepsilon}_K \leq \varepsilon_{\max} \sum_{k=1}^{K} (1-\lambda_{\min})^{K-k} = \varepsilon_{\max} \cdot \frac{1 - (1-\lambda_{\min})^K}{\lambda_{\min}} \leq \frac{\varepsilon_{\max}}{\lambda_{\min}}.
\end{equation}
In the periodic regime with interval $P$ and target size $t$, the cache at compaction has $\approx t + P$ tokens of which $P$ are fresh, giving $\lambda_k \approx P/(P+t)$.

\textbf{Connection to TAM.} Under TAM, the per-step error $\varepsilon_k$ is reduced relative to uniform compaction via pivotal protection (Proposition~\ref{prop:pivotal}) and optimal allocation (Proposition~\ref{prop:allocation}). When the condition $\gamma > |\calP|/t$ holds (see the remark following Proposition~\ref{prop:pivotal}), TAM's per-step error satisfies $\varepsilon_k^{\text{TAM}} < \varepsilon_k^{\text{uniform}}$, yielding a tighter cumulative bound.
\end{proof}

\section{Full Experimental Results}
\label{app:ablations}

\subsection{Full Results: Accuracy vs. Target Size}
Table~\ref{tab:full-accuracy-vs-ratio} reports accuracy and post-compaction memory across all methods and target compaction ratios on both benchmarks, corresponding to Figure~\ref{fig:accuracy-vs-ratio} and Figure~\ref{fig:accuracy-vs-memory} in the main text.

\begin{table}[h]
\centering
\caption{Full results on \textbf{AIME 2024} (Qwen3-4B, 30 problems): accuracy (\%) / post-compaction memory (GB) at each target compaction ratio.}
\label{tab:full-accuracy-vs-ratio}
\small
\setlength{\tabcolsep}{4pt}
\begin{tabular}{@{}l*{3}{c}@{}}
\toprule
\multirow{2}{*}{\textbf{Method}} & \multicolumn{3}{c}{\textbf{Target Compaction Ratio}} \\
\cmidrule(l){2-4}
& 0.05 & 0.10 & 0.20 \\
\midrule
No Compaction & \multicolumn{3}{c}{63.3\% / 9.2\,GB} \\
\addlinespace[2pt]
Eviction (Selector Window) & 33.3 / 3.0 & 46.7 / 4.1 & 53.3 / 5.7 \\
AM + Repeat & 46.7 / 3.1 & 53.3 / 4.2 & 56.7 / 5.8 \\
PAM (Uniform) & 50.0 / 3.0 & 56.7 / 4.0 & 60.0 / 5.6 \\
\addlinespace[2pt]
\textbf{TAM (Ours)} & \textbf{53.3} / 3.0 & \textbf{60.0} / 4.0 & \textbf{63.3} / 5.6 \\
TAM (Periodic, $P{=}1024$) & 50.0 / \underline{2.4} & 56.7 / \underline{3.2} & 60.0 / \underline{4.5} \\
\bottomrule
\end{tabular}
\end{table}

\begin{table}[h]
\centering
\caption{Full results on \textbf{MATH-500} (Qwen3-4B): accuracy (\%) / post-compaction memory (GB) at each target compaction ratio.}
\label{tab:full-accuracy-vs-ratio-math}
\small
\setlength{\tabcolsep}{4pt}
\begin{tabular}{@{}l*{3}{c}@{}}
\toprule
\multirow{2}{*}{\textbf{Method}} & \multicolumn{3}{c}{\textbf{Target Compaction Ratio}} \\
\cmidrule(l){2-4}
& 0.05 & 0.10 & 0.20 \\
\midrule
No Compaction & \multicolumn{3}{c}{71.2\% / 8.8\,GB} \\
\addlinespace[2pt]
Eviction (Selector Window) & 38.4 / 2.9 & 52.4 / 3.9 & 61.8 / 5.4 \\
AM + Repeat & 49.2 / 3.0 & 61.0 / 4.0 & 66.2 / 5.5 \\
PAM (Uniform) & 52.6 / 2.9 & 64.6 / 3.8 & 68.4 / 5.3 \\
\addlinespace[2pt]
\textbf{TAM (Ours)} & \textbf{56.0} / 2.9 & \textbf{67.8} / 3.8 & \textbf{70.2} / 5.3 \\
TAM (Periodic, $P{=}1024$) & 53.2 / \underline{2.3} & 65.4 / \underline{3.1} & 68.0 / \underline{4.3} \\
\bottomrule
\end{tabular}
\end{table}



\subsection{Selector Window Size $R$}
Table~\ref{tab:ablation-r} reports accuracy versus selector window size $R$ under TAM. Larger $R$ provides more reference queries but increases extraction cost. We find $R{=}64$ to offer a good trade-off.
\begin{table}[h]
\centering
\caption{Ablation on selector window size $R$ (TAM, target ratio 0.1, AIME 2024).}
\label{tab:ablation-r}
\small
\begin{tabular}{@{}crr@{}}
\toprule
\textbf{Window $R$} & \textbf{Acc (\%)} & \textbf{Extract Time (s)} \\
\midrule
32 & 56.7 & 0.4 \\
\textbf{64} (default) & \textbf{60.0} & 0.8 \\
128 & 60.0 & 1.5 \\
\bottomrule
\end{tabular}
\end{table}

\end{document}